\newcommand{\T}{\intercal}
\newcommand{\R}{\mathbb{R}}
\newcommand{\Df}{D^{\mathrm{full}}{}}
\newcommand{\Dk}{D^{\setminus k}{}}
\newcommand{\Dl}{D^{\setminus 1}{}}
\newcommand{\Xk}{X^{\setminus k}{}}
\newcommand{\Yk}{Y^{\setminus k}{}}
\newcommand{\tf}{\theta^{\mathrm{full}}{}}
\newcommand{\tk}{\theta^{\setminus k}{}}
\newcommand{\tl}{\theta^{\setminus 1}{}}
\newcommand{\tr}{\theta^{\mathrm{res}}{}}
\newcommand{\ti}{\theta^{\mathrm{inf}}}
\newcommand{\yk}{\hat{y}^{\setminus k}}
\newcommand{\zk}{\hat{z}^{\setminus k}}
\newcommand{\Lf}{L^{\mathrm{full}}{}}
\newcommand{\Lk}{L^{\setminus k}{}}
\newcommand{\deq}{\stackrel{\mathclap{\Delta}}{=}}
\newcommand{\iid}{\stackrel{\tiny{\mathrm{i.i.d.}}}{\sim}}
\newcommand{\am}{\mathrm{argmin}}
\newcommand{\argmint}{\underset{\theta}{\am}}
\newcommand{\yhat}{\hat{y}}
\newcommand{\blockcomment}[1]{}
\newtheorem{theorem}{Theorem}
\newtheorem{proposition}[theorem]{Proposition}
\newtheorem{corollary}[theorem]{Corollary}
\theoremstyle{definition}
\newtheorem{definition}[theorem]{Definition}
\DeclareMathOperator*{\argmin}{\arg\!\min}
\begin{document}

\twocolumn[

\aistatstitle{Approximate Data Deletion from Machine Learning Models}

\aistatsauthor{ Zachary Izzo \And Mary Anne Smart \And Kamalika Chaudhuri \And James Zou }

\aistatsaddress{ Dept. of Mathematics \\ Stanford University \\ \texttt{zizzo@stanford.edu} \And  Department of CS\&E \\ UC San Diego \\ \texttt{msmart@eng.ucsd.edu} \And Department of CS\&E \\ UC San Diego \\ \texttt{kamalika@cs.ucsd.edu} \And Deptartment of BDS \\ Stanford University \\ \texttt{jamesz@stanford.edu} } ]

\begin{abstract}
Deleting data from a trained machine learning (ML) model is a critical task in many applications. For example, we may want to remove the influence of training points that might be out of date or outliers. Regulations such as EU's General Data Protection Regulation also stipulate that individuals can request to have their data deleted. The naive approach to data deletion is to retrain the ML model on the remaining data, but this is too time consuming.
In this work, we propose a new approximate deletion method for linear and logistic models whose computational cost is linear in the the feature dimension $d$ and independent of the number of training data $n$. This is a significant gain over all existing methods, which all have superlinear time dependence on the dimension. We also develop a new feature-injection test to evaluate the thoroughness of data deletion from ML models.
\end{abstract}

\section{Introduction}
\label{intro}
Given a trained machine learning (ML) model, there are many settings where we would like to \emph{delete} specific training points from this trained model. Deletion here means that we need to post-process the model to remove the effect of the specified training point(s). One example of the need for deletion is the Right to be Forgotten requirement which is a part of many policies including the EU's General Data Protection Regulation and the recent California Consumer Privacy Act. The Right to be Forgotten stipulates that individuals can request to have their personal data be deleted and cease to be used by organizations and companies such as Google, Facebook, etc. The challenge here is that even after an organization deletes the data associated with a given individual, information about that individual may persist in predictions made by machine learning models trained on the deleted data. These predictions may in turn leak information, impeding the individual's ability to truly be ``forgotten." For example, recent works show how one can reconstruct training data by attacking vision and NLP models~\citep{zhang2019secret}. Therefore there is a great need for approaches to remove an individual's data from the trained ML model as much as possible.

We propose a computational model inspired by this problem. After allowing a reasonable amount of precomputation, the model designer will receive a request to delete a batch of $k$ points from the model. Our goal is to accomplish this task as efficiently and accurately as possible.

A first plausible solution is exact data deletion, where the goal is to exactly reproduce the model that would have been output had the deleted points been omitted from training. However, in general this is computationally demanding: except for a few limited scenarios (e.g. \citep{forget} for K-means clustering), it will require retraining the model from scratch. Even in the simple case of training a logistic regression model via SGD, this will take time $O(ndP)$, where $n$ is the size of the dataset, $d$ is the dimensionality of the data, and $P$ is the number of passes over the data. When deletion requests need to be fulfilled promptly and in an online setting, retraining the model completely is infeasible. This motivates our study of approximate data deletion: by relaxing the requirements for removing data from the model, we hope to make the problem computationally tractable.

Approximate data deletion has two main challenges -- algorithmic (i.e. how to delete points effectively and quickly) and evaluation (i.e. how to quantify the quality of our approximate deletion). In this paper, we make progress in both of these areas.

Existing approaches to approximate deletion include the use of influence functions \citep{blackbox} and Newton's method. Both of these methods have computational costs which scale as $\Omega(d^2)$, where $d$ is the dimensionality of the data. We develop the first approximate deletion method with $O(d)$ computational cost, dubbed the \emph{projective residual update}, which computes the projection of the exact parameter update vector onto a particular low-dimensional subspace. The dependence of the computational cost on $d$ matches the trivial lower bound $\Omega(d)$ required to fully specify all of the entries of the model parameters (which we also assume to be $d$-dimensional), and is independent of the number of training data $n$. We additionally show that the PRU is optimal in terms of deletion accuracy within a certain class of gradient-based deletion methods.

Additionally motivated by privacy concerns, we propose a new evaluation criterion, dubbed the \emph{feature injection test}, which captures a deletion method’s ability to remove the model’s knowledge of sensitive attributes of the deleted points. The test works by adding a synthetic feature to only the deleted points which is perfectly correlated with the label, then measuring the amount by which the deletion method removes the weight on this artificial feature. All of our theoretical findings are corroborated with experiments on both real and synthetic datasets.
  
\textbf{Summary of contributions.}
\begin{itemize}
    \item We introduce a novel approximate data deletion method, the \emph{projective residual update} (PRU), which has a time complexity that is \emph{linear} in the dimension of the deleted data and is independent of the size of the dataset. We show that this method is optimal among a certain class of gradient-based updates in terms of deletion accuracy.
    
    \item We propose a new metric for evaluating data removal from models---the \emph{feature injection test} (FIT)---which captures how well we can remove the model's ``knowledge" of a sensitive, highly predictive feature present in the data.
    
    \item Experiments support our theoretical findings.
\end{itemize}

\section{Notation and Problem Setup}
\label{notation}
For the reader's convenience, we collect key notation and background here.
Throughout the paper, $n$ denotes the total number of training points, $d$ denotes the data dimension, and $k$ denotes the number of data points to be deleted from the model.
The $k$ points to be deleted will be supplied as a batch request---that is, the $k$ points should be deleted simultaneously, rather than one-by-one. We may think of this either as a request from a group of individuals, or a request to delete all of the data for one individual who has $k$ datapoints associated to her in the database.
We will always assume that $n\gg d \gg k$.

\begin{itemize}
    \item $\theta\in\R^d$ denotes the model parameters.
    \item $\Df = \{(x_i, y_i)\}_{i=1}^n \subseteq \R^d\times \R$ is the full set of training data. Throughout the paper, we assume that the feature vectors $x_i$ are in general position, i.e. that any collection of at most $d$ $x_i$s may be assumed to be linearly independent. This assumption holds with probability 1 when the $x_i$ are drawn i.i.d. from any distribution arising from a probability density on $\R^d$ (i.e. a probability distribution on $\R^d$ which is absolutely continuous with respect to the Lebesgue measure), for instance a non-degenerate Gaussian.
    \item $X = \begin{bmatrix} x_1 & \cdots & x_n\end{bmatrix}^\T \in \R^{n\times d}$ is the data matrix for $\Df$; its rows are the feature vectors $x_i^\T$. Note that since we have assumed that the $x_i$ are in general position and that $n\gg d$, $X$ is implicitly assumed to have full column rank.
    \item $Y = (y_1, \ldots, y_n)^\T \in \R^n$ is the response vector for $\Df$.
    \item $\Dk = \{(x_i, y_i)\}_{i=k+1}^n$ is the dataset with the $k$ desired points removed. We assume WLOG that that these are the first $k$ points, and we will frequently refer to this as the leave-$k$-out (LKO) dataset.
    \item $\Lf(\theta) = \sum_{i=1}^n \ell(x_i, y_i; \theta) + \frac{\lambda}{2} \lVert \theta \rVert_2^2$ is the (ridge-regularized) loss on the full dataset. The ``single-point" loss function $\ell$ will be the quadratic loss for linear regression ($\frac12 (\theta^\T x_i - y_i)^2$). Note that this includes the unregularized setting by simply taking the regularization strength $\lambda = 0$.
    \item $\Lk(\theta) = \sum_{i=k+1}^n \ell(x_i, y_i; \theta) + \frac{\lambda}{2}\lVert \theta \rVert_2^2$ is the loss on the LKO dataset. We require that the regularization strength be fixed independent of the number of samples.
    \item $\tf = \argmin_\theta \Lf(\theta)$ are the model parameters when fitted to the full dataset. We will  refer to the model with these parameters as the full model. In the case of linear regression, $\tf$ has the explicit form $\tf = (X^\T X + \lambda I)^{-1} XY$. (This is derived by setting the gradient of the loss to zero.)
    \item $\tk = \argmin_\theta \Lk(\theta)$ are the model paramteres when fitted to the LKO dataset. We will refer to the model with these parameters as the LKO model.
    \item $\yk_i = \tk^\T x_i$ is the prediction of the LKO model on the $i$-th datapoint.
\end{itemize}
We remark that although the number of data points $k$ to be deleted from the model is small compared to the dimension $d$, we make no more assumptions. In particular, we do not assume that the removed points need to be in any way ``similar" (e.g. i.i.d.) to the rest of the data and specifically consider cases where large outliers are removed. As a result, the removal of these points can still have a large impact on the model parameters.

For our discussions of computational cost, we are interested in updating and quickly redeploying the model after a deletion request. Thus, we consider only the ``just-in-time" (i.e. at the time of the deletion request) computational cost of each method. A reasonable amount of precomputation (that is, computations which can be done without knowledge of the $k$ points to be deleted) will be permitted without being included in the computational cost. Here, ``reasonable" is simply meant to exclude trivial but prohibitively expensive methods such as training a model on each subset of the training data, then returning the model parameters corresponding to the dataset with the appropriate points removed at deletion time.

We emphasize that we will focus on fulfulling a \emph{single} such batch deletion request. While simple, this framework captures the key essence of the data deletion challenge. Extending our methods to work in a fully online setting, where we may receive several batch deletion requests and the precomputation required between each request becomes significant, is an important next step towards practical approximate deletion methods.


We obtain results for both linear and logistic regression models. The results for logistic regression are an extension of the results for linear regression, so we choose to focus primarily on linear regression in the main body of the paper and defer a more complete discussion of logistic regression to the appendix.

Finally, we note that while we focus on linear models for the sake of theoretical clarity, these two scenarios capture most of the difficulty for nonlinear models as well. When retraining e.g. deep neural networks, it is often sufficient to consider all but the final layer as a fixed feature map on top of which we perform either linear or logistic regression (for regression and classification tasks, respectively). Retraining only the last layer is then sufficient and reduces to the two cases we consider in this paper. This method can be seen in \citep{blackbox}, in which the authors retrain their model to determine which training images are most influential for an image classification task; and in \citep{shapley}, where the authors retrain their model in order to compute data Shapley values, a measure of how much each data point contributes to the model's overall accuracy. In both cases, retraining only the last layer of the model is sufficient to give accurate results, and our results here can be similarly applied to the last layer.

\section{Methods}
\label{methods}
We give a brief overview of approximate deletion methods for parametric models from the literature.
\paragraph{Exact retraining}
The most straightforward way to remove data is by retraining the model completely. For the case of linear regression, we can naively compute $\tk$ using the analytic formula $\tk = (\Xk^\T \Xk + \lambda I)^{-1}\Xk^\T \Yk$. ($\Xk$ and $\Yk$ are the data matrix and response vector for $\Dk$, respectively.) The bottleneck is in forming the new Hessian $\Xk^\T \Xk$, giving an overall computational cost of $O(nd^2)$. Alternatively, we could retrain via iterative methods like SGD. This will take time $O(ndP)$, where $P$ is the number of passes over the dataset.
\paragraph{Newton's method}
Recent work \citep{certified} has attempted approximate retraining by taking a single step of Newton's method. This amounts to forming a quadratic approximation to the LKO loss $\Lk$ and moving to the minimizer of the approximation. This can be done in closed form, yielding the update
\begin{equation}
    \theta_{\textrm{Newton}} = \tf - [\nabla_\theta^2 \Lk(\tf)]^{-1}\nabla_\theta\Lk(\tf).
\end{equation}
When the loss function is quadratic in $\theta$ (as is the case in least squares linear regression), the approximation to $\Lk$ is just $\Lk$ itself and so Newton's method gives the exact solution. That is, in the case of linear regression, Newton's method reduces to the trivial ``approximate" retraining method of retraining the model exactly.

Since the full Hessian can be computed without knowing which points need to be deleted, we can consider it an offline cost. For linear regression, the new Hessian matrix is a rank $k$ update of the full Hessian, which can be computed via the Sherman-Morrison-Woodbury formula in $O(kd^2)$ time. In general, forming and inverting the new Hessian may take up to $O(nd^2)$ time.

\paragraph{Influence method} 
Recent works studied how to estimate the influence of a particular training point on the model's predictions \citep{ij, blackbox}. While the original methods were developed for different applications---e.g. interpretation and cross-validation---they can be adapted to perform approximate data deletion. Under suitable assumptions on the loss function $\ell$, we can view the model parameters $\theta$ as a function of weights on the data: $\theta(w) \equiv \argmin_\theta \sum_{i=1}^n w_i \ell(x_i, y_i; \theta)$.
In this setting, $\tf = \theta(\textbf{1})$ where $\textbf{1}$ is the all $1$s vector and $\tk = \theta((\underbrace{0,\ldots}_k,\underbrace{1,\ldots}_{n-k})^\T)$. The influence function approach (henceforth referred to as the influence method) uses the linear approximation to $\theta(w)$ about $w=\textbf{1}$ to estimate $\tk$. \citep{ij, blackbox} show that the linear approximation is given by
\begin{equation}
    \theta^{\textrm{inf}} = \tf - [\nabla_\theta^2 \Lf(\tf)]^{-1} \nabla_\theta \Lk(\tf).
\end{equation}
Assuming that we already have access to the inverse of the  Hessian, the bottleneck for this method is the Hessian-gradient product. This gives a $O(d^2)$ computational cost.

We summarize the asymptotic online computational costs in Table \ref{asymptotic_runtimes} alongside the computational cost of our novel method, the projective residual update. (Since the Newton step with Sherman-Morrison formula is exact and has a strictly lower computational cost than the naive method of retraining from scratch, we do not include the naive method in the table.) The precomputation costs for each of the methods (Newton, influence, and PRU) are approximately the same; they are dominated by forming and inverting the full Hessian, which takes time $O(nd^2)$.
\begin{table}[h]
\caption{Asymptotic computational costs for each approximate retraining method. The projective residual update is the only method with linear dependence on $d$.}
\label{asymptotic_runtimes}
\begin{center}
\begin{small}
\begin{sc}
\begin{tabular}{ccc}
\toprule
Exact &
Influence &
Projective residual \\
\midrule
$O(kd^2)$ &
$O(d^2)$  &
$O(k^2d)$ \\
\bottomrule
\end{tabular}
\end{sc}
\end{small}
\end{center}
\end{table}

\section{The Projective Residual Update}
We now introduce our proposed approximate update. We leverage \emph{synthetic data}, a term we use to refer to artificial datapoints which we construct and whose properties form the basis of the intuition for our method. We combine gradient methods with synthetic data to achieve an approximate parameter update which is fast for deleting small groups of points.
The intuition is as follows: if we can calculate the values $\yk_i = \tk^\T x_i$ that the model would predict on each of the removed $x_i$s \emph{without knowing $\tk$}, then minimize the loss of the model on the synthetic points $(x_i, \yk_i)$ for $i=1,\ldots, k$, we should expect our parameters to move closer to $\tk$ since $\tk$ achieves the minimum loss on the points $(x_i, \yk_i)$. We will minimize the loss on these synthetic points by taking a (slightly modified) gradient step.

It may be surprising that we can calculate the values $\yk_i$ without needing to know $\tk$. We accomplish this by generalizing a well-known technique from statistics for computing leave-one-out residuals for linear models.
As in the influence function applications, we incur an upfront cost of forming the so-called ``hat matrix" $H\equiv X(X^\T X + \lambda I)^{-1}X^\T$ for the full linear regression. Since we can compute this matrix without needing to know which points will be deleted, it is reasonable to consider it as an offline computation which will not be included in the computational cost of the update itself.

We formalize the intuition for the update as follows. Assume that we can compute $\yk_i$ efficiently, without needing to know $\tk$. The gradient of the loss on the synthetic points $(x_i, \yk_i)$ is $\nabla_\theta L^{\{(x_i, \yk_i)\}_{i=1}^k}(\theta) = \sum_{i=1}^k (\theta^\T x_i - \yk_i)x_i$. Substituting $\tk^\T x_i$ for $\yk_i$ and rearranging, then setting $\theta = \tf$ shows that $\nabla_\theta L^{\{(x_i, \yk_i)\}_{i=1}^k}(\tf) = \left(\sum_{i=1}^k x_ix_i^\T \right)(\tf - \tk)$.
We show that the form of the matrix $\sum_{i=1}^k x_ix_i^\T$ allows us to efficiently compute a pseudoinverse. We summarize these steps in Algorithm \ref{alg:res}.

\begin{algorithm}[H]
\caption{The projective residual update}\label{alg:res}
\begin{algorithmic}[1]
\Procedure{PRU}{$X, Y, H, \tf, k$}
\State $\hat{y}'_1,\ldots,\hat{y}'_k \gets$ \textsc{LKO}($X, Y, H, k$)
\State $S^{-1} \gets$ \textsc{PseudoInv}$(\sum_{i=1}^k x_i x_i^\T)$
\State $\nabla L \gets \sum_{i=1}^k (\tf^\T x_i - \hat{y}'_i)x_i$ 
\State \textbf{return} $\tf - \textsc{FastMult}(S^{-1}, \nabla L)$
\EndProcedure
\end{algorithmic}
\end{algorithm}
\begin{algorithm}[H]
\caption{Leave-$k$-out predictions}\label{alg: lko}
\begin{algorithmic}[1]
\Procedure{LKO}{$X, Y, H, \tf, k$}
\State $R \gets Y_{1:k} - X_{1:k}\tf$
\State $D \gets \textrm{diag}(\{(1-H_{ii})^{-1}\}_{i=1}^k)$
\State $T_{ij} \gets \textbf{1}\{i\neq j\}\frac{H_{ij}}{1-H_{jj}}$
\State $T \gets (T_{ij})_{i,j=1}^k$
\State $\hat{Y}^{\setminus k} \gets Y_{1:k} - (I-T)^{-1}DR$
\State \textbf{return} $\hat{Y}^{\setminus k}$
\EndProcedure
\end{algorithmic}
\end{algorithm}


The results of running the residual update are described by Theorem \ref{main}, our main theorem.
\begin{theorem}\label{main}
Algorithm \ref{alg:res} computes $\tr = \tf + \textrm{proj}_{\textrm{span}(x_1,\ldots,x_k)}(\tk - \tf)$ with computational cost $O(k^2d)$.
\end{theorem}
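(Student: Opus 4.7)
The plan is to prove both parts by a sequence of algebraic reductions that leverage the identity $\nabla L = S(\tf - \tk)$, where $S = \sum_{i=1}^k x_i x_i^\T$, which the authors already derived just before the theorem statement. The proof then rests on three ingredients: verifying that the \textsc{LKO} subroutine really returns $\yk_i = \tk^\T x_i$, identifying the action of $S^+$ on the gradient as a projection, and bounding the computational cost of each line of Algorithm~\ref{alg:res}.

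To show the \textsc{LKO} subroutine is correct, I would invoke the Sherman--Morrison--Woodbury identity with $A = X^\T X + \lambda I$ and the low-rank perturbation $X_{1:k}^\T X_{1:k}$: then $\tk = (A - X_{1:k}^\T X_{1:k})^{-1}(X^\T Y - X_{1:k}^\T Y_{1:k})$, and Woodbury gives $(A - X_{1:k}^\T X_{1:k})^{-1} = A^{-1} + A^{-1} X_{1:k}^\T (I - H_k)^{-1} X_{1:k} A^{-1}$, where $H_k := H_{1:k,1:k}$ is the leading $k \times k$ block of the hat matrix. Using $A^{-1} X^\T Y = \tf$ and $X_{1:k} A^{-1} X_{1:k}^\T = H_k$, a short simplification collapses the result to $X_{1:k}\tk = Y_{1:k} - (I - H_k)^{-1} R$ with $R = Y_{1:k} - X_{1:k}\tf$. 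A routine diagonal-scaling check then shows that the form $(I - T)^{-1} D R$ used in Algorithm~\ref{alg: lko} agrees with $(I - H_k)^{-1} R$. To identify the projection, I would use that $S$ is symmetric positive semidefinite, so $S^+ S$ is the orthogonal projector onto $\textrm{col}(S)$; under the general position assumption (with $k \leq d$), $\textrm{col}(S) = \textrm{span}(x_1, \ldots, x_k)$. Combining these, $\tr = \tf - S^+ \nabla L = \tf - S^+ S(\tf - \tk) = \tf + \textrm{proj}_{\textrm{span}(x_1,\ldots,x_k)}(\tk - \tf)$, as claimed.

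For the runtime, I would account for costs line by line. With the offline precomputation of $A^{-1}$ (or equivalently $X A^{-1}$), the \textsc{LKO} subroutine extracts $H_{1:k,1:k}$ in $O(k^2 d)$, forms $R$ in $O(kd)$, builds $D$ and $T$ in $O(k^2)$, and solves a $k \times k$ linear system in $O(k^3)$. For the pseudoinverse step I would exploit the factorization $S = X_{1:k}^\T X_{1:k}$ to write $S^+ = X_{1:k}^\T G^{-2} X_{1:k}$ with $G = X_{1:k} X_{1:k}^\T \in \R^{k\times k}$ invertible (by general position); forming and factoring $G$ costs $O(k^2 d + k^3)$, and applying $S^+$ to $\nabla L$ via three sequential matrix--vector products then costs $O(kd + k^2)$. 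Computing $\nabla L$ itself costs $O(kd)$. Summing all terms and using $k \leq d$ gives the overall bound $O(k^2 d)$. The main technical obstacle is the Woodbury derivation and the index-bookkeeping check establishing that the $(I - T)^{-1} D R$ representation in Algorithm~\ref{alg: lko} equals $(I - H_k)^{-1} R$; once that identity is in hand, the rest of the argument is a direct application of standard pseudoinverse identities to the gradient formula already given in the body of the paper.
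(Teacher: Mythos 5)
Your proposal is correct and reaches both conclusions of the theorem, but it proves the two supporting lemmas by genuinely different routes than the paper. For the LKO predictions, the paper does not invoke Woodbury at all: it observes that $\tk$ also minimizes the loss on the \emph{full-size} dataset in which $y_1,\ldots,y_k$ are replaced by the (unknown) predictions $\yk_1,\ldots,\yk_k$, so that the precomputed hat matrix satisfies $HY' = \hat{Y}_{\setminus k}$; reading off the first $k$ coordinates yields a self-consistent $k\times k$ linear system in the LKO residuals, solved in $O(k^3)$. Your Woodbury computation lands on the same closed form $X_{1:k}\tk = Y_{1:k} - (I-H_k)^{-1}R$ and is a perfectly valid, more mechanical substitute; one caveat is that with $T$ as literally defined in Algorithm~\ref{alg: lko} (denominators $1-H_{jj}$ indexed by column), the diagonal-scaling identity is $D(I-T)^{-1} = (I-H_k)^{-1}$ by symmetry of $H_k$, not $(I-T)^{-1}D$, so your ``routine check'' would surface a transposition in the displayed pseudocode rather than confirm it verbatim. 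For the pseudoinverse, the paper runs Gram--Schmidt on $x_1,\ldots,x_k$, eigendecomposes the $k\times k$ coefficient matrix $C = \sum_i c_i c_i^\T$, and lifts that to an eigendecomposition of $S$ which it then applies in low-rank form; your identity $S^{+} = X_{1:k}^\T G^{-2} X_{1:k}$ with $G = X_{1:k}X_{1:k}^\T$ (invertible by general position since $k \le d$) sidesteps the eigendecomposition entirely at the same $O(k^2 d + k^3)$ cost and with cleaner bookkeeping, though strictly it verifies a mathematically equivalent algorithm rather than the stated \textsc{PseudoInv}/\textsc{FastMult} subroutines. The pivotal step is identical in both arguments: $S^{+}S$ is the orthogonal projector onto $\mathrm{span}(x_1,\ldots,x_k)$, so $\tf - S^{+}\nabla L = \tf - S^{+}S(\tf - \tk) = \tf + \mathrm{proj}_{\mathrm{span}(x_1,\ldots,x_k)}(\tk - \tf)$. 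One small accounting note: your $O(k^2 d)$ cost for obtaining $H_{1:k,1:k}$ requires precomputing $XA^{-1}$ (not merely $A^{-1}$, which would give $O(kd^2)$); the paper avoids the issue by precomputing $H$ itself, making the extraction $O(k^2)$.
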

The result of Theorem \ref{main} is striking. It says that the projective residual update makes the \emph{most improvement possible} for any parameter update which is a linear combination of the removed $x_i$s. As a direct result of this, we have the following corollary.

\begin{definition}
For any dataset $D = \{(\bar{x}_i, \bar{y}_i)\}_{i=1}^N$ (not necessarily the same as the original dataset $\Df$), define $L^D(\theta) = \sum_{i=1}^N\frac12 (\theta^\T \bar{x}_i - \bar{y}_i)^2$. Define a \emph{gradient-based update} of the model parameters $\tf$ as any update $\theta^{\mathrm{approx}}$ which can be computed by the following procedure: set $\theta_0 = \tf$, then define $$\theta_{t+1} = \theta_t - \alpha_t \nabla_\theta L^{D_t}(\theta_t)$$ for some sequence of datasets $D_t$. Finally, let $\theta^{\mathrm{approx}} = \theta_T$ for some $T$.
\end{definition}

\begin{corollary} \label{cor: opt}
Let $\mathcal{S} = \{x_i\}_{i=1}^k \times \R$ be the set of all datapoints whose feature vectors belong to the set of points to be deleted from the original dataset $\Df$. If $\theta^{\mathrm{approx}}$ is any gradient-based update of $\tf$ with $D_t \subseteq \mathcal{S}$ for all $t$, then we have
$$\lVert \tk - \tr \rVert \leq \lVert \tk - \theta^{\mathrm{approx}}\rVert.$$
\end{corollary}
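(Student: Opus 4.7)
The plan is to show that every gradient-based update of the prescribed form leaves $\tf$ in a displacement confined to $V := \mathrm{span}(x_1,\ldots,x_k)$, and then invoke Theorem \ref{main} together with the standard optimality property of orthogonal projection.

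First I would unpack the definition of a gradient-based update. For the quadratic loss,
\[
\nabla_\theta L^{D_t}(\theta_t) = \sum_{(\bar x_j, \bar y_j) \in D_t} (\theta_t^\T \bar x_j - \bar y_j)\,\bar x_j,
\]
so each increment $\theta_{t+1}-\theta_t = -\alpha_t \nabla_\theta L^{D_t}(\theta_t)$ lies in the linear span of the feature vectors appearing in $D_t$. Because $D_t \subseteq \mathcal{S} = \{x_i\}_{i=1}^k \times \R$, every such feature vector is one of $x_1,\ldots,x_k$, so $\theta_{t+1}-\theta_t \in V$. A straightforward induction starting from $\theta_0 = \tf$ then shows $\theta^{\mathrm{approx}} - \tf \in V$.

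Next I would apply Theorem \ref{main}, which gives $\tr - \tf = \mathrm{proj}_V(\tk - \tf)$. The basic variational characterization of orthogonal projection states that $\mathrm{proj}_V(\tk-\tf)$ is the unique element of $V$ minimizing $\lVert (\tk-\tf) - v\rVert$ over $v \in V$. Setting $v = \theta^{\mathrm{approx}} - \tf \in V$ yields
\[
\lVert \tk - \tr \rVert = \lVert (\tk - \tf) - \mathrm{proj}_V(\tk - \tf)\rVert \leq \lVert (\tk - \tf) - (\theta^{\mathrm{approx}} - \tf)\rVert = \lVert \tk - \theta^{\mathrm{approx}} \rVert,
\]
which is the desired inequality.

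There is no real obstacle here: the only subtlety is checking that the ``gradient-based'' class is closed under staying in $V$, which follows immediately from the explicit form of the least-squares gradient. Everything else reduces to the minimum-distance property of orthogonal projection, so the argument is essentially two short observations glued together by Theorem \ref{main}.
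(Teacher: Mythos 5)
Your argument is correct and follows the same route as the paper: show that each gradient increment lies in $\mathrm{span}(x_1,\ldots,x_k)$ so that $\theta^{\mathrm{approx}} - \tf$ stays in that subspace, then combine Theorem \ref{main} with the minimum-distance property of orthogonal projection. The paper's proof is just a one-line version of this, leaving the projection-optimality step implicit, so your write-up is simply a more explicit rendering of the same idea.
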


\begin{proof}
This follows immediately from the fact that the gradient of the square loss on any point $(x_i, y)$ is a scalar multiple of $x_i$, and therefore the update $\tf - \theta^{\mathrm{approx}} \in \mathrm{span}(x_1,\ldots,x_k)$. 
\end{proof}


As we will see later, Theorem \ref{main} guarantees that PRU performs well for deleting the model's knowledge of sensitive attributes under data sparsity conditions.

\paragraph{The \textsc{LKO}, \textsc{PseudoInv}, and \textsc{FastMult} subroutines}
The ability to efficiently calculate $\yk_i$, $i=1,\ldots k$ is crucial to our method. Algorithm \ref{alg: lko} accomplishes this by generalizing a well-known result from statistics which allows one to compute the leave-one-out residuals $\hat{y}^{\setminus 1}_i-y_i$. In Algorithm \ref{alg: lko}, $X_{1:k}$ denotes the first $k$ rows of $X$, $Y_{1:k}$ the first $k$ entries of $Y$, and $H$ the hat matrix for the data, defined below. Since the residuals $r_i = y_i - x_i^\T \tf$ and the hat matrix $H$ can be computed before the time of the deletion request, these steps can be excluded from the total computational cost of Algorithm \ref{alg: lko}.

\begin{theorem}\label{lko}
Algorithm \ref{alg: lko} computes the LKO predictions $\yk_i$, $i=1,\ldots,k \:$ in $O(k^3)$ time.
\end{theorem}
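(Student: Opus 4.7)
}

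The plan is to derive a closed-form expression for $\yk$ in terms of quantities precomputed offline---the hat matrix $H$, the residuals $R$, and $\tf$---and then show that Algorithm~\ref{alg: lko} implements that formula via a single $k \times k$ linear solve. I would start from $\tk = A_k^{-1}(X^\T Y - X_{1:k}^\T Y_{1:k})$, where $A = X^\T X + \lambda I$ and $A_k = A - X_{1:k}^\T X_{1:k}$. The Sherman--Morrison--Woodbury identity gives
$$
A_k^{-1} \;=\; A^{-1} + A^{-1} X_{1:k}^\T (I - H_{kk})^{-1} X_{1:k} A^{-1},
$$
where $H_{kk}$ is the leading $k \times k$ block of $H$. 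Left-multiplying by $X_{1:k}$, using $X_{1:k} A^{-1} X_{1:k}^\T = H_{kk}$ and $X_{1:k} A^{-1} X^\T Y = X_{1:k}\tf$, and collapsing the factor $I + H_{kk}(I-H_{kk})^{-1} = (I-H_{kk})^{-1}$, the expression simplifies, after cancellation, to
$$
\yk \;=\; Y_{1:k} - (I - H_{kk})^{-1} R,
$$
the natural generalization of the classical leave-one-out identity $\yhat^{\setminus 1}_1 = y_1 - r_1/(1 - H_{11})$.

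The second step is to verify that Algorithm~\ref{alg: lko} computes this vector. Inspecting the definitions of $D$ and $T$, one checks directly that they satisfy $(I-T)^{-1} D = (I-H_{kk})^{-1}$; equivalently, rewriting the linear system $(I - H_{kk})\tilde R = R$ row-by-row and dividing through by $1 - H_{ii}$ puts it in the fixed-point form $\tilde R = DR + T \tilde R$, whose unique solution is $(I - T)^{-1} D R$. Uniqueness holds because the general-position assumption on the $x_i$ makes $I - H_{kk}$, and hence $I - T$, invertible.

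For the runtime: $R$ and the relevant entries of $H$ are obtained from the offline precomputation, so the deletion-time work consists only of assembling the $k \times k$ matrices $D$ and $T$ ($O(k^2)$), multiplying $DR$ ($O(k)$ since $D$ is diagonal), solving the $k \times k$ linear system in $I - T$ ($O(k^3)$), and subtracting from $Y_{1:k}$ ($O(k)$), totaling $O(k^3)$. The main obstacle is the Woodbury step and the telescoping it triggers: one has to carefully group the $X^\T Y$ and $X_{1:k}^\T Y_{1:k}$ contributions so that they regroup into $X_{1:k} \tf$ and $H_{kk} Y_{1:k}$, at which point the factor $(I - H_{kk})^{-1}$ emerges cleanly. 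Once the closed form is in hand, identification with the algorithm and the operation count are routine.
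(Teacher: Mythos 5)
Your proposal is correct, but it reaches the crucial $k\times k$ linear system by a genuinely different route than the paper. The paper never invokes Sherman--Morrison--Woodbury for this theorem: instead it observes that $\tk$ also minimizes the \emph{full-size} least squares objective in which the responses of the deleted points are replaced by the imputed values $\yk_i = \tk^\T x_i$ (it attains zero loss on those synthetic points and minimizes the remaining terms), so the hat-matrix identity applies with $Y' = (\yk_1,\ldots,\yk_k,y_{k+1},\ldots,y_n)^\T$ in place of $Y$; reading off the first $k$ rows of $HY'=\hat Y_{\setminus k}$ and subtracting $\hat Y = HY$ gives exactly the system $(I-H_{kk})r^{\setminus k} = R$ that your Woodbury computation produces in the closed form $\yk = Y_{1:k} - (I-H_{kk})^{-1}R$. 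Your derivation buys an explicit formula, already covers ridge since you work with $A = X^\T X + \lambda I$, and makes invertibility easy to certify (by the determinant identity $\det(A_k)=\det(A)\det(I-H_{kk})$, it reduces to invertibility of $A_k = X_{k+1:n}^\T X_{k+1:n}+\lambda I$, which holds since $n-k\geq d$ and the points are in general position --- a one-line justification you should add, as ``general position'' alone does not immediately give it); the paper's self-consistency argument is more elementary, needing no matrix-inversion lemma, and is reused verbatim in its appendix for weighted, regularized least squares by swapping $H$ for $H_{\lambda,w}$. One bookkeeping caveat: your identification of the algorithm with the closed form implicitly takes $T_{ij}=\mathbf{1}\{i\neq j\}\,h_{ij}/(1-h_{ii})$ (divide row $i$ of $I-H_{kk}$ by $1-h_{ii}$), which is consistent with the paper's residual recursion but not with the literal $T_{ij}=\mathbf{1}\{i\neq j\}\,h_{ij}/(1-h_{jj})$ printed in Algorithm \ref{alg: lko}; that appears to be an index typo in the paper ($H$ is symmetric, but $1-h_{ii}\neq 1-h_{jj}$ in general), and your version is the one that makes $(I-T)^{-1}D=(I-H_{kk})^{-1}$ true, so this is not a flaw in your argument. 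The runtime accounting is the same in both proofs.
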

The proof of Theorem \ref{lko} can be found in Appendix \ref{proof: lko}.
The low-rank structure of $A \equiv \sum_{i=1}^k x_i x_i^\T$ allows us to quickly compute its pseudoinverse. We do this by finding the eigendecomposition of an associated $k\times k$ matrix (which can again be done quickly when $k$ is small, see e.g. \citep{eigen}), which we then leverage to find the eigendecomposition of $A$. Computing the pseudoinverse in this way also allows us to multiply by it quickly.
For a more detailed explanation, refer to the appendix.

\subsection{Outlier deletion} \label{performance}
To illustrate the usefulness of the residual update, we consider its performance compared to the influence method on the dataset $\Df = \{(\lambda x_1, \lambda y_1)\} \cup \Dl$, where $\Dl=\{(x_i, y_i)\}_{i=2}^{n+1}$ and we are attempting to remove the first datapoint from $\Df$ so that we are left with $\Dl$. In particular, we examine the difference in performance between the residual and influence updates as the parameter $\lambda\rightarrow\infty$.

\begin{theorem}\label{infbad}
Let $\Df = \{(\lambda x_1, \lambda y_1)\} \cup \Dl$. Then $\theta^{\mathrm{inf}} \rightarrow \tf$ as $\lambda\rightarrow\infty$.
\end{theorem}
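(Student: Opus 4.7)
The plan is to compute the influence update $\theta^{\mathrm{inf}} - \tf = -[\nabla^2 \Lf(\tf)]^{-1}\nabla \Lk(\tf)$ in closed form and show every $\lambda$-dependent factor collapses to zero. Writing the regularization strength as $\lambda_{\mathrm{reg}}$ (to avoid the notational clash with the outlier-scaling $\lambda$) and using $\nabla \Lf(\tf)=0$ together with $\Lk = \Lf - \ell(\lambda x_1, \lambda y_1;\cdot)$, I would first compute
$$\nabla \Lk(\tf) = -\nabla_\theta \ell(\lambda x_1, \lambda y_1; \tf) = -\lambda^2(\tf^\T x_1 - y_1)\, x_1.$$
Next, let $M \equiv \sum_{i=2}^{n+1} x_i x_i^\T + \lambda_{\mathrm{reg}} I$, which is fixed in $\lambda$ and positive definite. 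Then $\nabla^2 \Lf(\tf) = M + \lambda^2 x_1 x_1^\T$, and one application of Sherman--Morrison gives
$$[M + \lambda^2 x_1 x_1^\T]^{-1} x_1 = \frac{M^{-1} x_1}{1 + \lambda^2 x_1^\T M^{-1} x_1}.$$
Combining these two observations yields the compact formula
$$\theta^{\mathrm{inf}} - \tf = \frac{\lambda^2(\tf^\T x_1 - y_1)}{1 + \lambda^2 x_1^\T M^{-1} x_1}\; M^{-1} x_1.$$

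The remaining step---and the one I expect to be the actual work---is to control the size of the outlier residual $\tf^\T x_1 - y_1$ as $\lambda\to\infty$. The cleanest route is to substitute the closed form $\tf = [M + \lambda^2 x_1 x_1^\T]^{-1}(\lambda^2 y_1 x_1 + b)$ with $b \equiv \sum_{i\geq 2} y_i x_i$, apply Sherman--Morrison a second time, and simplify, which gives
$$\tf^\T x_1 - y_1 = -\frac{y_1 - x_1^\T M^{-1} b}{1 + \lambda^2 x_1^\T M^{-1} x_1} = \Theta(\lambda^{-2}).$$
Hence $\lambda^2(\tf^\T x_1 - y_1)$ is bounded in $\lambda$, the denominator $1 + \lambda^2 x_1^\T M^{-1} x_1$ is $\Theta(\lambda^2)$, and the coefficient multiplying $M^{-1} x_1$ is therefore $O(\lambda^{-2})\to 0$. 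This proves $\theta^{\mathrm{inf}} \to \tf$.

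The main obstacle is the decay estimate on $\tf^\T x_1 - y_1$; it is intuitive (a huge outlier gets fit essentially exactly) but requires a little calculation to make rigorous. A basis-free alternative that sidesteps inverting matrices is to show $\tf$ stays bounded as $\lambda\to\infty$---via the energy bound $\Lf(\tf) \leq \Lf(0)$, which controls both the $\lambda^2 (\tf^\T x_1 - y_1)^2$ and $\|\tf\|^2$ contributions---and then isolate the coefficient of $x_1$ in the optimality equation $\lambda^2(\tf^\T x_1 - y_1) x_1 = -\sum_{i\geq 2}(\tf^\T x_i - y_i)x_i - \lambda_{\mathrm{reg}} \tf$, whose right-hand side is $O(1)$. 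This forces $\lambda^2(\tf^\T x_1 - y_1) = O(1)$, which is all that is needed to conclude.
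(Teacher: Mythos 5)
Your proof is correct, and it reaches the conclusion by a more direct route than the paper's. The paper never writes down $\ti - \tf$ in one piece: it instead computes $\ti - \tl$ and shows (via Sherman--Morrison and Taylor expansions in $\lambda^{-2}$) that it tends to the same limit as $\tf - \tl$ (the latter established separately in Proposition \ref{asymptotic_baseline}), whence $\ti \to \tf$. You instead collapse the influence step itself: $\ti - \tf = \lambda^2(\tf^\T x_1 - y_1)\,[M+\lambda^2 x_1x_1^\T]^{-1}x_1$, kill the Hessian factor with one Sherman--Morrison application, and then prove the key decay estimate on the residual. That estimate is the shared technical core of both arguments --- the paper derives $x_1^\T\tf - y_1 = \Theta(\lambda^{-2})$ as its equation (\ref{outlier_residual2}) through a two-term Taylor expansion of the terms it labels $(i)$ and $(ii)$, whereas your second Sherman--Morrison application yields the \emph{exact} identity $\tf^\T x_1 - y_1 = (x_1^\T M^{-1}b - y_1)/(1+\lambda^2 x_1^\T M^{-1}x_1)$, with no remainder terms to track. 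Your version is shorter and cleaner; what the paper's detour buys is the stronger quantitative picture needed elsewhere in Appendix \ref{sec: outlier}, namely the explicit finite limit of the true step $\tf - \tl$ (used to argue that the PRU's improvement stays bounded away from zero while the influence step vanishes). Two cosmetic points: your $\Theta(\lambda^{-2})$ should strictly be $O(\lambda^{-2})$ (it degenerates to $0$ when $y_1 = x_1^\T M^{-1}b$, which only helps), and your inclusion of the ridge term in $M$ is a harmless generalization of the paper's unregularized computation; the fallback energy/optimality argument you sketch at the end is not needed given the exact formula, and as stated the bound $\Lf(\tf)\le\Lf(0)$ should be replaced by comparison against a $\theta_0$ that interpolates the outlier so that the right-hand side stays bounded in $\lambda$.
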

Theorem \ref{infbad} says that when we try to delete points with large norm, the influence method will barely update the parameters at all, with the update shrinking as the size of the removed features increases. This makes intuitive sense. The performance of the influence method relies on the Hessian of the full loss being a good approximation of the Hessian of the leave-one-out loss. As the scaling factor $\lambda$ grows, the full Hessian $X^\T X + \lambda^2 x_1 x_1^\T$ deviates more and more from the LOO Hessian $X^\T X$, causing this drop in performance.
On the other hand, as the size of the outlier grows, the exact parameter update vector $\tf - \tl$ approaches a well-defined, finite limit. The PRU computes the projection of this update onto the subspace spanned by the deleted points, and therefore in general its improvement will remain bounded away from 0 even as the outlier grows. It follows that the PRU will outperform the influence method for the deletion of large enough outliers. For a complete proof of this fact, see Proposition \ref{asymptotic_baseline} in Appendix \ref{sec: outlier}.

\subsection{Extension to logistic regression} \label{sec: logistic pru}
The generalization of the PRU to logistic regression relies on the fact that a logistic model can be trained by iteratively reweighted least squares; indeed, a Newton step for logistic regression reduces to the solution of a weighted least squares problem \citep{murphy2012}. We leverage this fact along with the generalization of Theorem \ref{lko} from Appendix \ref{appendix: generalized lko} to compute a fast approximation to a Newton step. The method is given by Algorithm \ref{alg:logistic_pru}. (Note: $H_{\lambda, w}$ denotes the Hessian for a weighted linear least squares problem with weights $w$ and regularization $\lambda.$)

\begin{algorithm}
\caption{The PRU for logistic regression}\label{alg:logistic_pru}
\begin{algorithmic}[1]
\Procedure{LogisticPRU}{$X, Y, \tf, k$}
\For{$i=1,\ldots, n$}
\State $w_i \gets h_{\tf}(x_i)(1-h_{\tf}(x_i))$
\EndFor
\State $S_{\tf} \gets \textrm{diag}(w)$
\State $Z \gets X\tf + S_{\tf}^{-1}(Y - h_{\tf})$ 
\State \textbf{return} $\textsc{ResidualUpdate}(X, Z, H_{\lambda, w}, \tf, k)$
\EndProcedure
\end{algorithmic}
\end{algorithm}

\begin{theorem}\label{logisticresup}
Algorithm \ref{alg:logistic_pru} computes the update $\tr = \tf + \textrm{proj}_{\textrm{span}(x_1,\ldots,x_k)}(\Delta_{\textrm{Newton}})$ in $O(k^2 d)$ time.
\end{theorem}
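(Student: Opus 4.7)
The plan is to reduce a Newton step on the logistic LKO loss to the solution of an auxiliary weighted linear least squares problem via iteratively reweighted least squares (IRLS), and then combine Theorem \ref{main} with the weighted generalization of the LKO subroutine in Appendix \ref{appendix: generalized lko} to recognize what \textsc{ResidualUpdate} outputs when invoked as in Algorithm \ref{alg:logistic_pru}.

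\emph{Step 1 (IRLS correspondence).} First, I would verify the standard identity that $\theta^\ast := \tf + \Delta_{\mathrm{Newton}}$---the Newton iterate on the logistic LKO loss starting from $\tf$---coincides with the minimizer of the weighted least squares objective
\begin{equation*}
\tfrac12 \sum_{i=k+1}^n w_i \bigl(z_i - \theta^{\T} x_i\bigr)^2 + \tfrac{\lambda}{2}\lVert\theta\rVert_2^2,
\end{equation*}
where $w_i$ and $z_i$ are exactly the quantities defined in lines 3 and 6 of Algorithm \ref{alg:logistic_pru}. The proof is a direct algebraic manipulation: multiplying $\Delta_{\mathrm{Newton}} = -[\nabla^2 \Lk(\tf)]^{-1}\nabla\Lk(\tf)$ through by the Hessian $\Xk^{\T} S_{\tf}^{\setminus k} \Xk + \lambda I$ and using $\nabla\Lk(\tf) = \Xk^{\T}(h_{\tf}(\Xk) - \Yk) + \lambda\tf$ collapses to the normal equations for this weighted regression.

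\emph{Step 2 (apply \textsc{ResidualUpdate}).} By the weighted generalization of Theorem \ref{lko} (Appendix \ref{appendix: generalized lko}), the \textsc{LKO} subroutine called inside \textsc{ResidualUpdate} returns $\yk_i = \theta^{\ast \T} x_i$ for $i = 1, \ldots, k$ whenever it is supplied with the weighted hat matrix $H_{\lambda, w}$. Substituting these into the remaining lines of \textsc{ResidualUpdate} yields $\nabla L = \bigl(\sum_{i=1}^k x_i x_i^{\T}\bigr)(\tf - \theta^\ast)$, and multiplying by $S^{-1} = \bigl(\sum_{i=1}^k x_i x_i^{\T}\bigr)^{+}$ collapses this to the orthogonal projection $\mathrm{proj}_{\mathrm{span}(x_1, \ldots, x_k)}(\tf - \theta^\ast)$. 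Hence the overall return value is $\tf - \mathrm{proj}_{\mathrm{span}(x_1, \ldots, x_k)}(\tf - \theta^\ast) = \tf + \mathrm{proj}_{\mathrm{span}(x_1, \ldots, x_k)}(\Delta_{\mathrm{Newton}})$, which is precisely the claimed form of $\tr$. This step is structurally identical to the proof of Theorem \ref{main}; the only new ingredient is that the leave-$k$-out predictions now come from a weighted regression.

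\emph{Step 3 (runtime and main obstacle).} The weights $\{w_i\}$, working response $Z$, and weighted hat matrix $H_{\lambda, w}$ depend only on the full dataset and $\tf$, both available before the deletion request, so they can be folded into precomputation. The only just-in-time work is the call to \textsc{ResidualUpdate}, which is $O(k^2 d)$ by Theorem \ref{main}. The main obstacle is Step 2: one must carefully confirm that the generalized LKO routine of Appendix \ref{appendix: generalized lko} really does yield the weighted-LKO fitted values $\theta^{\ast \T} x_i$ when supplied with $H_{\lambda,w}$ in place of the unweighted hat matrix. Once that is established, the remainder is essentially bookkeeping on top of Theorems \ref{main} and \ref{lko}.
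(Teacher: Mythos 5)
Your proposal is correct and follows essentially the same route as the paper's proof in Appendix \ref{gen_to_logistic}: establish the Newton-step-as-weighted-least-squares (IRLS) identity, apply the weighted/regularized generalization of Theorem \ref{lko} to obtain the LKO fitted values, and then reuse the projection and runtime argument of Theorem \ref{main}, with the hat matrix $H_{\lambda,w}$ and working response $Z$ folded into precomputation. The point you flag as the main obstacle---that the generalized LKO routine really returns $\theta^{\ast\T}x_i$ for the weighted problem---is exactly the content of Appendix \ref{appendix: generalized lko}, which the paper's proof invokes in the same way.
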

Refer to Appendix \ref{gen_to_logistic} for an explanation of the algorithm and the proof.

\section{Evaluation Metrics}
\label{metrics}
\paragraph{\texorpdfstring{$L^2$}{L2} distance}
A natural way to measure the effectiveness of an approximate data deletion method is to consider the $L^2$ distance between the estimated parameters and the parameters obtained via retraining from scratch. If the approximately retrained parameters have a small $L^2$ distance to the exactly retrained parameters, then when the models depend continuously on their parameters (such as in linear regression), the models are guaranteed to make similar predictions.

In addition to the general similarity between two models captured by the $L^2$ distance, we are also interested in studying a more fine-grained metric: how well can an approximate deletion method remove specific sensitive attributes from the retrained model? This motivates a new metric that we propose: the \emph{feature injection test}.
\paragraph{Feature injection test}
The rationale behind this new test is as follows.
If a user's data belongs to some small minority group within a dataset, that user may be concerned about what the data collector will be able to learn about her and this small group. When she requests that her data be deleted from a model, she will want any of these localized correlations that the model learned to be forgotten. 

This thought experiment motivates a new test for evaluating data deletion, which we call the feature injection test (FIT). We inject a strong signal into our dataset which we expect the model to learn. Specifically, we append an extra feature to the data which is equal to zero for all but a small subset of the datapoints, and which is perfectly correlated with the label we wish to predict. In the case of a linear classifier, we expect the model to learn a weight for this special feature with absolute value significantly greater than zero. After this special subset is deleted, however, any strictly positive regularization will force the weight on this feature to be 0 in the exactly retrained model. We can plot the value of the model's learned weight for this special feature before and after deletion and use this as a measure of the effectiveness of the approximate deletion method.

Below we give a general description the FIT for logistic regression. Let $\Df = \{(x_i, y_i)\}_{i=1}^n \subseteq \R^d \times \{0, 1\}$ be the full dataset and assume WLOG that we wish to delete points $i=1,\ldots,k$. We require that the deleted points all belong to the positive class, i.e. $y_1=\cdots=y_k = 1$.
\begin{enumerate}
    \item Set $\tilde{x}_i = [x_i^\T, \: 1]^\T$ for $1\leq i \leq k$ and $\tilde{x}_i = [x_i^\T, \: 0]^\T$ for $k < i \leq n$. The last entry of each $\tilde{x}_i$ is the injected feature; each deleted point has an injected feature with value 1, while the non-deleted points have injected feature value 0.
    \item Train a logistic classifier on $\{(\tilde{x}_i,y_i)\}_{i=1}^n$ (using ridge-regularized  cross-entropy loss and strictly positive regularization strength) and let $\tf \in \R^{d+1}$ be the weights of the resulting model. Define $w_* = \tf[d+1]$ to be the $d+1$-th entry of $\tf$, i.e. the weight corresponding to the injected feature.
    \item Given the output $\theta^{\textrm{approx}}$ of an approximate retraining method, its FIT metric is defined as $\theta^{\textrm{approx}}[d+1] / w_*$. The closer the FIT metric is to 0, the better the approximate deletion method is at removing the injected sensitive feature from the model.
\end{enumerate}
For a description of the FIT for linear regression, see Appendix \ref{sec: data construction}.

\begin{table*}[t!]
\caption{Mean runtimes for each method as a fraction of full retraining runtime (100 trials). (\textsc{inf} stands for influence method.) In all instances, the standard error was not within the significant digits of the mean (all standard errors were of order $10^{-4}$ or smaller) so for clarity we do not include the errors. The absolute runtimes of the exact method to which we compare is in the appendix. The results match our theory that PRU's runtime is especially advantageous for high dimensions and relatively small $k$. } 
\label{runtime_experiment}
\begin{center}
\begin{small}
\begin{sc}
\begin{tabular}{l|ccccc}
\toprule
& $d = 1000$ & $d = 1500$ & $d = 2000$ & $d = 2500$ & $d = 3000$\\
\midrule
$k = 1$ (inf) &
$0.0085$ &
$0.0053$ &
$0.0041$ &
$0.0036$ &
$0.0028$ \\
$k = 1$ (pru) &
$\textbf{0.0062}$ &
$\textbf{0.0017}$ &
$\textbf{0.0008}$ &
$\textbf{0.0004}$ &
$\textbf{0.0003}$ \\
\midrule
$k = 5$ (inf) &
$\textbf{0.0092}$ &
$0.0052$ &
$0.0043$ &
$0.0033$ &
$0.0028$ \\
$k = 5$ (pru) &
$0.0112$ &
$\textbf{0.0035}$ &
$\textbf{0.0019}$ &
$\textbf{0.0011}$ &
$\textbf{0.0007}$ \\
\midrule
$k = 10$ (inf) &
$\textbf{0.0098}$ &
$0.0054$ &
$0.0045$ &
$0.0033$ &
$0.0031$ \\
$k = 10$ (pru) &
$0.0155$ &
$\textbf{0.0049}$ &
$\textbf{0.0025}$ &
$\textbf{0.0015}$ &
$\textbf{0.0010}$ \\
\midrule
$k = 25$ (inf) &
$\textbf{0.0105}$ &
$\textbf{0.0058}$ &
$\textbf{0.0050}$ &
$\textbf{0.0035}$ &
$0.0032$ \\
$k = 25$ (pru) &
$0.0365$ &
$0.0121$ &
$0.0067$ &
$0.0037$ &
$\textbf{0.0026}$ \\
\midrule
$k = 50$ (inf) &
$\textbf{0.0122}$ &
$\textbf{0.0065}$ &
$\textbf{0.0051}$ &
$\textbf{0.0036}$ &
$\textbf{0.0033}$ \\
$k = 50$ (pru) &
$0.0794$ &
$0.0273$ &
$0.0151$ &
$0.0085$ &
$0.0059$ \\
\bottomrule
\end{tabular}
\end{sc}
\end{small}
\end{center}
\end{table*}

\section{Empirical Validation}
We now verify our theoretical guarantees and compare the accuracy and speed of the various retraining methods experimentally. We emphasize that these experiments are intended to confirm the theory rather than demonstrate practical usage. Deploying and testing large-scale data deletion methods is an important direction of future work.
Our analysis and methods provide an important initial step towards this goal. Code for reproducing our experiments can be found at \href{https://github.com/zleizzo/datadeletion}{\texttt{https://github.com/zleizzo/datadeletion}}.
\subsection{Linear regression}
\paragraph{Synthetic datasets}
The synthetic datasets are constructed so that the linear regression model is well-specified. That is, given the data matrix $X$, the response vector $Y$ is given by $Y = X\theta^* + \varepsilon$, where $\varepsilon \sim N(0, \sigma^2 I_n)$ is the error vector. For all of the synthetic datasets, we take $n=10d$.
Slight modifications are made to this general setup for each experiment. For outlier removal tests, we scale a subset of the full dataset to create outliers, then delete these points. For the sparse data setting, we generate sparse feature vectors rather than drawing from a Gaussian. For full details on dataset construction, refer to the appendix.
\paragraph{Yelp} We select 200 users from the Yelp dataset and use their reviews (2100 reviews in total). We use a separate sample of reviews from the dataset to construct a vocabulary of the 1500 most common words; then we represent each review in our dataset as a vector of counts denoting how many times each word in the vocabulary appeared in the given review. Four and five star reviews are considered positive, and the rest are negative. To turn the regression model's predictions into a binary classifier, we threshold scores at zero--a predicted value that is greater than zero becomes a prediction of the positive class while a predicted value that is less than zero becomes a prediction of the negative class.
\paragraph{Results - Synthetic data}
The experimental results closely match the theory in all respects. For the runtime experiments, refer to Table \ref{runtime_experiment}. Both the influence method and the projective residual update are significantly faster than exact model retraining. In the extreme case of $d=3000$ and a removal group of size $k=1$, the projective residual update is more than 3000 times faster than exact retraining. The relative speed of the PRU and influence method are also as we expect: PRU is faster than influence for small group sizes, and the size of the largest group that we can delete while maintaining this speed advantage increases as $d$ increases. For 3000-dimensional data, PRU has the speed advantage for deleting groups as large as 25.

For the FIT, refer to Table \ref{synthetic_weight}. As the data matrix becomes more sparse, the span of the removed points become more likely to contain the $d$-th standard basis vector $e_d$ (or a vector very close to it), allowing the residual update to completely remove the special weight. We observe this phenomenon in several of the cases we tested (denoted by an asterisk in table \ref{synthetic_weight}).
\begin{table}[!htb]
      \caption{Mean results for the FIT on synthetic data (50 trials). The special weight is given as fraction of baseline weight (lower the better). Results are for $d=1500$ for various group sizes ($k$) and sparsity values ($p$). See text for discussion of the standard errors and the notable values (indicated by asterisks). The baseline weights to which we compare can be found in the appendix. These results match our theory that PRU performs especially well in the sparse regime.}
    \label{synthetic_weight}    
        \begin{center}
        \begin{small}
        \begin{sc}
        \begin{tabular}{l|cccc}
        \toprule
        & $p = 0.25$ & $0.1$ & $0.05$\\
        \midrule
        $k = 5$ (inf)&
        1.09&
        0.99&
        1.01\\
        $k = 5$ (pru)&
        \textbf{0.98}&
        \textbf{0.96}&
        \textbf{0.93}\\
        \midrule
        $k = 50$ (inf)&
        \textbf{0.84}&
        0.97&
        $2.32^{**}$\\
        $k = 50$ (pru)&
        0.86&
        \textbf{0.67}&
        \textbf{0.35}\\
        \midrule
        $k = 100$ (inf)&
        0.76&
        0.92&
        0.98\\
        $k = 100$ (pru)&
        \textbf{0.72}&
        \textbf{0.32}&
        $\textbf{0.00}^*$\\
        \bottomrule
        \end{tabular}
        \end{sc}
        \end{small}
        \end{center}
\end{table}
\begin{table}[!htb]
        
        \caption{Mean results for the $L^2$ test on synthetic data (50 trials). The $L^2$ distance is given as fraction of baseline distance ($\lVert \tf - \tk\rVert$; the values of the starting distance can be found in the appendix). Results are for $d=1500$ for various group sizes ($k$) and outlier sizes ($\lambda$, see Theorem \ref{infbad}).}
        \label{synthetic_l2}
        \begin{center}
        \begin{small}
        \begin{sc}
        \begin{tabular}{l|ccc}
        \toprule
        & $\lambda = 1$ & $\lambda = 10$ & $\lambda = 100$ \\
        \midrule
        $k = 5$ (inf) &
        $\textbf{0.38}$ &
        $0.93$ &
        $0.99$ \\
        $k = 5$ (pru) &
        $0.92$ &
        $\textbf{0.92}$ &
        $\textbf{0.92}$ \\
        \midrule
        $k = 50$ (inf) &
        $\textbf{0.16}$ &
        $0.91$ &
        $0.99$ \\
        $k = 50$ (pru) &
        $0.88$ &
        $\textbf{0.88}$ &
        $\textbf{0.88}$ \\
        \midrule
        $k = 100$ (inf) &
        $\textbf{0.14}$ &
        $0.90$ &
        $0.99$ \\
        $k = 100$ (pru) &
        $0.88$ &
        $\textbf{0.88}$ &
        $\textbf{0.88}$ \\
        \bottomrule
        \end{tabular}
        \end{sc}
        \end{small}
        \end{center}
\end{table}
All of the standard errors for the PRU were well below 5\% of the mean. In contrast, the influence method performs poorly compared to the PRU in most scenarios, in addition to exhibiting much less numerical stability.

For the $L^2$ metric, refer to Table \ref{synthetic_l2}. The influence method outperforms PRU for deleting ``typical" points (when $\lambda=1$, the deleted points are i.i.d. with the rest of the data rather than being outliers). As the size of the deleted points grows, however, we see a steep drop in the performance of the influence method, while PRU remains almost completely unaffected.
%
\paragraph{Results - Yelp}
Since the Yelp dataset does not have large outliers, the influence method outperforms the projective residual update in terms of $L^2$ distance. For larger groups, however, the PRU's performance on the FIT is superior to the influence method, which fails to remove the injected signal. These results are summarized in Figure \ref{fig:realdata}. The fact that the influence method performs well in terms of $L^2$ distance and yet poorly on the FIT for the same dataset highlights the fact that $L^2$ distance alone is not a sufficient metric to consider, especially if the main concern is privacy. Due to space constraints, the results for the $L^2$ test can be found in the appendix.
\begin{figure}[h]
\includegraphics[width=\linewidth]{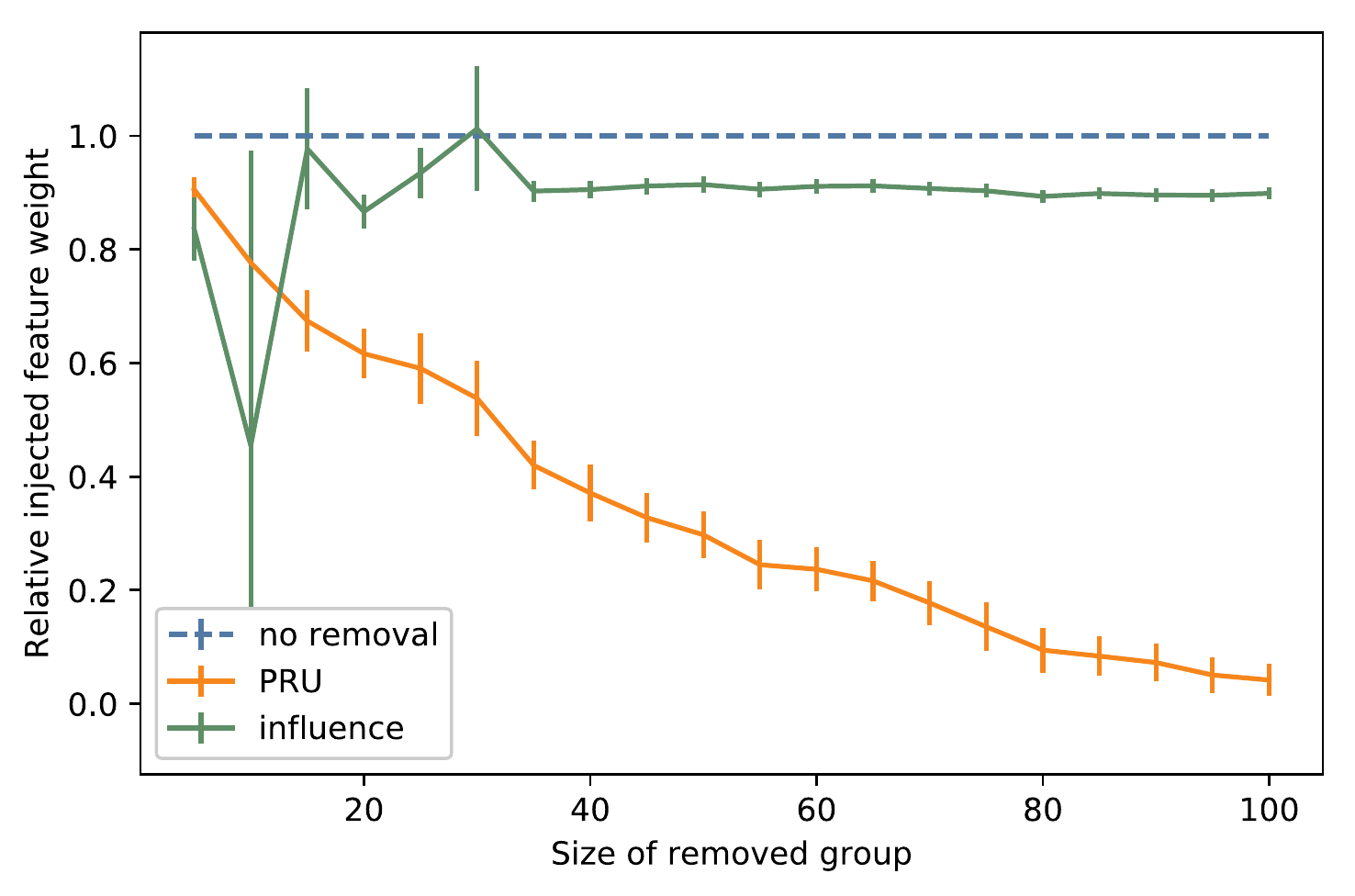}
\caption{
Yelp FIT experiment.
We plot the mean of each metric $\pm$ the standard error of the mean.
The PRU deletes the injected feature much more effectively and exhibits greater stability.}
\label{fig:realdata} 
\end{figure}
%
\subsection{Logistic regression}\label{sec: logistic regression}
We test the PRU on a synthetic data logistic regression experiment. The data $(x, y)\in \mathbb{R}^d \times \{0, 1\}$ were generated so that the logistic model is well-specified, i.e. there exists some parameter $\theta^*$ such that $\mathbb{P}(y = 1 | x) = \sigma(x^\T \theta^*)$, where $\sigma(z) = 1/(1 + e^{-z})$.
For this experiment, we generate $n=5000$ datapoints of dimension $d=1000$. We compare the influence method to the PRU and leave analysis of the Newton step to the appendix.
Refer to Tables \ref{logistic fit} and \ref{logistic l2}. Consistent with linear regression and our theory, in the sparse data regime, the PRU performs very well in terms of both the $L^2$ metric and the FIT. 

\begin{table}[!htb]
\caption{Median FIT results for logistic regression over 10 trials. Due to space constraints, we report these figures with the IQR in the the appendix; the variation across trials was generally very small. For larger group sizes and sparse data, the PRU is able to completely remove the injected feature.}
\label{logistic fit}
\begin{center}
\begin{small}
\begin{sc}
\begin{tabular}{l|ccc}
\toprule
             & $p=0.5$            & $0.1$            & $0.05$           \\
\midrule
\midrule
$k=25$ (inf) & \textbf{0.82} & 0.77 & 0.78 \\
$k=25$ (pru) & 0.86 & \textbf{0.69} & \textbf{0.44} \\
\midrule
$k=50$ (inf) & 0.81 & 0.82 & 0.82 \\
$k=50$ (pru) & 0.81 & \textbf{0.48} & \textbf{0.02} \\
\midrule
$k=100$ (inf) & 0.82 & 0.85 & 0.84 \\
$k=100$ (pru) & \textbf{0.71} & \textbf{0.00} & \textbf{0.00} \\
\bottomrule
\end{tabular}
\end{sc}
\end{small}
\end{center}
\hfill
\caption{Median $L^2$ results for logistic regression over 10 trials. See the appendix for IQR. We examine the performance of each method for different group deletion sizes $(k)$ and different levels of data sparsity ($p$). The results closely match the theory. For larger group and sparse data, PRU outperforms the influence method.}
\label{logistic l2}
\begin{center}
\begin{small}
\begin{sc}
\begin{tabular}{l|ccc}
\toprule
& $p=0.5$                    & $0.1$                    & $0.05$      \\
\midrule
$k=25$ (inf) & \textbf{0.85} & \textbf{0.77} & 0.78 \\
$k=25$ (pru) & 0.86 & 0.80 & \textbf{0.65} \\
\midrule
$k=50$ (inf) & 0.85 & 0.83 & 0.82 \\
$k=50$ (pru) & 0.85 & \textbf{0.69} & \textbf{0.20} \\
\midrule
$k=100$ (inf) & 0.85 & 0.86 & 0.84 \\
$k=100$ (pru) & \textbf{0.80} & \textbf{0.24} & \textbf{0.13} \\
\bottomrule
\end{tabular}
\end{sc}
\end{small}
\end{center}
\end{table}

\section{Related Work}
\label{related}
Most previous work on this topic has focused on specific classes of models. For example, Ginart et al. examined the problem of data deletion for clustering algorithms \citep{forget}. Tsai et al. use retraining with warm starts as a data deletion method
for logistic regression, although they refer to the problem as decremental training \citep{decr}. Others such as Cauwenberghs et al. have studied the problem of decremental training for SVM models \citep{svm}.
Cao et al. consider a more general class of models and propose a
solution using the statistical query framework for the problem of data deletion (which they refer to as machine unlearning);
their proposed method for adaptive SQ learning algorithms, such as gradient descent, is analogous to the aforementioned warm start method \citep{unlearn}. Bourtoule et al. introduce a method called SISA (Sharded, Isolated, Sliced, and Aggregated) training, that minimizes the computational cost of retraining by taking advantage of sharding and caching operations during training  \citep{bourtoule}.
Other previous approaches for machine unlearning are very closely related to the influence and Newton's methods. The method introduced by Monari and Dreyfus in \citep{Monari_Dreyfus_2000} is the same as the influence method with a different update step size. In the earlier work of Hansen and Larsen \citep{Hansen_Larsen_1996}, their proposed update is simply a Newton step.


While our work has applications to privacy, it is distinct from previous research focusing on privacy. The differential privacy framework, for instance, provides a way to minimize the risks associated with belonging to a model’s training set.
However, the strong privacy guarantees offered by differential privacy often come at the cost of significantly
reduced accuracy. In a setting where most users are not overly concerned about privacy and are willing
to share data, the option to use a non-private model
while allowing users to opt-out if they change their minds provides a useful middle ground.
Drawing on the definition of differential privacy, the authors of \citep{certified} define a notion of $\epsilon$-certified removal from machine learning models. They propose a modification of Newton's method for data deletion from linear models to satisfy this definition.

Our method's key advantage over previous work is that it is the first deletion algorithm for parametric models with a runtime linear in the data dimension and independent of the dataset size. This is a crucial development for modern high-dimensional ML.

\section{Conclusion}
\label{conclusion}
We consider the problem of approximate data deletion from ML models, with a particular focus on linear and logistic regression. We develop a novel algorithm---the projective residual update (PRU)---with a computational cost which is linear in the dimension of the data, a substantial improvement over existing methods with quadratic dimension dependence. We also introduce a new metric for evaluating data removal from models---the feature injection test---a measure of the removal of the model's knowledge of a sensitive, highly predictive feature present in the data.
Experiments on both real and synthetic data corroborate the theory. With any approximate deletion method, the accuracy of the approximation will decay as more deletion requests are processed. Extending our ideas to address this challenge is an important direction for future work.

\subsubsection*{Acknowledgements}
JZ is supported by NSF CCF 1763191, NSF CAREER 1942926, NIH P30AG059307, NIH U01MH098953 and grants from the Silicon Valley Foundation and the Chan-Zuckerberg Initiative. MS was supported by a Qualcomm Fellowship. KC thanks ONR under N00014-20-1-2334 and a Google Faculty Fellowship for research support. We also thank the anonymous reviewers for their insightful comments.

\nocite{*}
\bibliographystyle{plainnat}
\bibliography{theory}

\clearpage
\onecolumn
\appendix

\section{Detailed algorithm description and proof of Theorem \ref{main}}
\begin{algorithm}
\caption{Compute the pseudoinverse of $\sum_{i=1}^k x_i x_i^\T$\label{alg:pinv}}
\begin{algorithmic}[1]
\Procedure{PseudoInv}{$x_1, \ldots, x_k$}
\State $c_1,\ldots,c_k, u_1,\ldots,u_k \gets$ \textsc{Gram-Schmidt}($x_1,\ldots,x_k$)\Comment{$x_i = \sum_{j=1}^k c_{ij} u_j$.}
\State $C \gets \sum_{i=1}^k c_i c_i^\T$
\State $\lambda_1, a_1, \ldots, \lambda_k, a_k \gets$ \textsc{Eigendecompose($C$)}\Comment{Eigenvalue $\lambda_i$ has corresponding eignevector $a_i$}
\For{$i=1,\ldots,k$}
\State $v_i \gets \sum_{j=1}^k a_{ij}u_j$
\EndFor
\State \textbf{return} $\lambda_1^{-1}, v_1, \ldots, \lambda_k^{-1}, v_k$
\EndProcedure
\end{algorithmic}
\end{algorithm}

\begin{algorithm}
\caption{Fast multiplication by pseudoinverse}\label{alg:mastmult}
\begin{algorithmic}[1]
\Procedure{FastMult}{$S^{-1}, \nabla L$} \Comment{$S^{-1}$ must be given in its low-rank form $S^{-1} = \sum_{i=1}^k \lambda_i^{-1} v_i v_i^\T$}
\State \textbf{return} $\sum_{i=1}^k (\lambda_i^{-1} (v_i^\T \nabla L))v_i$ \Comment{Compute according to the specified parenthesization}
\EndProcedure
\end{algorithmic}
\end{algorithm}

We take a gradient step in the direction specified by the synthetic LKO points $(x_i, \yk_i)$, $i=1,\ldots, k$. That is, we update
\begin{equation}\label{grad}
\tr = \tf - \alpha\sum_{i=1}^k (\tf^\T x_i - \yk_i)x_i.
\end{equation}
Ordinarily, the parameter $\alpha$ is a scalar which specifies the step size. For our purposes, we will replace $\alpha$ with a ``step matrix" $A$.
\begin{proof}[Proof of Theorem \ref{main}]
Recalling that $\yk_i = \tk^\T x_i$, we can rewrite equation (\ref{grad}):
\begin{align}
\tf - A\sum_{i=1}^k (\tf^\T x_i - \yk_i)x_i &= \tf - A\sum_{i=1}^k (\tf^\T x_i - \tk^\T x_i)x_i  \nonumber \\
\label{rewrite} &= \tf - A\left(\sum_{i=1}^k x_i x_i^\T\right)(\tf - \tk).
\end{align}
Let $B = \sum_{i=1}^k x_i x_i^\T.$ Note that $\textrm{range}(B) = \textrm{span}\{x_1,\ldots,x_k\} \deq V_k$. Due to the form that $B$ has, we can efficiently compute its eigendecomposition $B = V\Lambda V^\T$, where $\Lambda = \textrm{diag}(\lambda_1,\ldots,\lambda_k, 0,\ldots, 0)$, $\lambda_1,\ldots,\lambda_k$ are the nonzero eigenvalues of $B$, and $VV^\T = I$. We then define 
\begin{equation}\label{stepdef}
    A = V\Lambda^\dagger V^\T, \hspace{.25in} \Lambda^\dagger = \textrm{diag}(\lambda_1^{-1}, \ldots, \lambda_k^{-1}, 0, \ldots, 0).
\end{equation}
This choice of $A$ gives us $AB = \sum_{i=1}^k v_i v_i^\T = \textrm{proj}_{V_k}$, and therefore the update (\ref{rewrite}) is equivalent to
\begin{equation}\label{proj}
    \tf + \textrm{proj}_{V_k}(\tk - \tf).
\end{equation}
This establishes the first claim in Theorem \ref{main}. It remains to perform the computational cost calculation. We analyze the computational cost of the algorithm by breaking it down into several submodules.

\noindent\textbf{Step 1: Computing $\yk_i$, $i=1,\ldots,k$}

By Theorem \ref{lko}, this step can be accomplished in $O(k^3)$ time.

\noindent\textbf{Step 2: Finding the eigendecomposition of $\sum_{i=1}^k x_ix_i^\T$}

We will show that this step can be completed in $O(k^2 d)$ time. We compute the eigendecomposition of $B\equiv \sum_{i=1}^k x_ix_i^\T$ as follows.
\begin{enumerate}[I.]
\item Perform Gram-Schmidt on $x_1,\ldots,x_k$ to recover $u_1,\ldots,u_k$ and coefficients $c_{ij}$. computational cost: $O(k^2d)$.
\begin{enumerate}
    \item In the $i$-th step, we set $w_i = x_i - ((x_i^\T u_1)u_1 + \cdots + (x_i^\T u_{i-1})u_{i-1}),$ followed by $u_i = w_i / \lVert w_i \rVert$. Naively computing the dot products, scalar-vector products, and vector sums for step $i$ takes $O(id)$ time. Summing over the steps, the total time to perform Gram-Schmidt is $\sum_{i=1}^k O(id) = O(k^2d)$.
    \item From the $i$-th step of Gram-Schmidt, we see that
    \begin{align*}
        x_i &= (x_i^\T u_1)u_1 + \cdots + (x_i^\T u_{i-1})u_{i-1} + \lVert w_i \rVert u_i \\
        \therefore c_{ij} &= \begin{cases}x_i^\T u_j, & 1\leq j < i \\ \lVert w_i \rVert, & j = i \\ 0, & j > i \end{cases}
    \end{align*}
    We can store these coefficients as we compute them during the Gram-Schmidt procedure without increasing the asymptotic time complexity of this step.
\end{enumerate}
\item Eigendecompose the $k\times k$ matrix $C = \sum_{i=1}^k c_ic_i^\T$ and recover the eigendecomposition of $B$. computational cost: $O(k^2 d)$.
\begin{enumerate}
    \item We claim that the first $k$ eigenvalues of $B$ are identical to the eigenvalues of $C$, and that the eigenvectors of $B$ can easily be recovered from the eigenvectors of $C$. In particular, if $a_1,\ldots,a_k\in\mathbb{R}^k$ are the eigenvectors of $C$, then $v_i = a_{i1}u_1 + \ldots + a_{ik}u_k$ is the $i$-th eigenvector of $B$.
    
    To see this, note that $R(B) = \textrm{span}\{x_1,\ldots,x_k\}$, so any eigenvector for a nonzero eigenvalue of $B$ must be in the span of the $x_i$. Since $u_1,\ldots,u_k$ have the same span as the $x_i$, if $v$ is an eigenvector for $B$ with nonzero eigenvalue $\lambda$, we can write $v=b_1u_1+\cdots+b_ku_k$. Let $b=(b_1,\ldots,b_k)^\T\in\mathbb{R}^k$. We can also rewrite $B = \sum_{i=1}^k x_i x_i^\T = \sum_{i, j, \ell=1}^k c_{ij} c_{i\ell} u_j u_\ell^\T$. Combining these facts yields
    \begin{align*}
        Bv &= \sum_{i,j,\ell=1}^k b_\ell c_{i\ell} c_{ij}u_j \\
        &= \sum_{i,j=1}^k (c_i^\T b) c_{ij}u_j \\
        &= \lambda b_1 u_1 + \cdots + \lambda b_k u_k.
    \end{align*}
    Since the $u_j$s are linearly independent, we can equate coefficients. Doing so shows that $\lambda b_j = \sum_{i=1}^k (c_i^\T b)c_{ij}$ for all $j=1,\ldots,k$. Vectorizing these equations, we have that $$Cb = \sum_{i=1}^k c_i (c_i^\T b) = \lambda b.$$ This chain of equalities holds in reverse order as well, so we conclude that $v$ is an eigenvector for $B$ with nonzero eigenvalue $\lambda$ iff $b$ is an eigenvector for $C$ with eigenvalue $\lambda$. Since we know that the remaining eigenvalues of $B$ are $0$, it suffices to find an eigendecomposition of $C$. Forming $C$ takes $O(k^3)$ time, and finding its eigendecomposition can be done (approximately) in $O(k^3)$ time, see \citep{eigen}. Finally, converting each eigenvector $a_i$ for $C$ into an eigenvector for $B$ takes $O(kd)$ time (we set $v_i = a_{i1}u_1 + \cdots + a_{ik}u_k$), so converting all $k$ of them takes $O(k^2d)$ time.
    
    \item Since we know $B$ is rank $k$, the remaining eigenvalues are $0$ and any orthonormal extension of the orthonormal eigenvectors $v_1,\ldots, v_k$ computed in step 2 will suffice to complete an orthonormal basis of eigenvectors for $\mathbb{R}^d$. Let $v_{k+1}, \ldots, v_d$ be any such extension. This gives us a complete orthonormal basis of eigenvectors $v_1,\ldots,v_d$ for $\mathbb{R}^d$ with associated eigenvalues $\lambda_1\geq \lambda_2\geq \cdots \geq \lambda_k > \lambda_{k+1}=\ldots=\lambda_d = 0$. We now define $A$ as in equation (\ref{stepdef}). Observe that since $(\Lambda^\dagger)_{ii} = 0$ for $i>k$, we can compute $A$ without needing to know the values of $v_{k+1},\ldots,v_d$:
    \begin{equation}\label{step}
        A = \sum_{i=1}^k \lambda_i^{-1} v_i v_i^\T.
    \end{equation}
\end{enumerate}
\end{enumerate}

\noindent\textbf{Step 3: Performing the update}

We will show that this step can be completed in $O(kd)$ time. Recall the form of the projective residual update: $$\tr = \tf - A\sum_{i=1}^k (\tf^\T x_i - \yhat_i')x_i.$$
\begin{enumerate}[I.]
    \item Form the vector $\nabla L = \sum_{i=1}^k (\tf^\T x_i - \yhat_i')x_i.$  (This is the gradient of the loss on the synthetic datapoints.) computational cost: $O(kd)$.
    \item Compute the step $A\nabla L$. computational cost: $O(kd)$.
    \begin{enumerate}
        \item Rather than performing the computationally expensive operations of forming the matrix $A$, then doing a $d\times d$ matrix-vector multiplication, we use the special form of $A$. Namely, we have
        \begin{align}
            A\nabla L &= \left(\sum_{i=1}^k \lambda_i^{-1} v_iv_i^\T\right)s \nonumber \\
            &= \sum_{i=1}^k (\lambda_i^{-1}(v_i^\T s))v_i. \label{fastprod}
        \end{align}
        \item Each term in the summand (\ref{fastprod}) can be computed in $O(d)$ time, so we can compute the entire sum in $O(kd)$ time.
    \end{enumerate}
    \item Update $\tr = \tf - A\nabla L$. computational cost: $O(d)$.
\end{enumerate}
Since we have assumed $k\leq d$, the total computational cost of the algorithm is therefore $O(k^3) + O(k^2 d) + O(kd) = O(k^2 d)$ as desired.
\end{proof}
Note that the crucial step of computing the exact leave-$k$-out predicted $y$-values may vary depending on the the specific instance of least squares we found ourselves in (e.g. with or without regularization or weighting, see Appendix \ref{gen_to_logistic}), but the rest of the algorithm remains exactly the same.

\section{Performance analysis for outlier removal} \label{sec: outlier}
In this section we prove Theorem \ref{infbad}. We also quantify the behavior of the true step $\tf - \tl$ as the outlier size $\lambda$ grows.

\begin{proposition}\label{asymptotic_baseline}
Let $\Df$ be as in Theorem \ref{infbad}. As $\lambda\rightarrow\infty$, $\tf - \tl\rightarrow C\hat{\Sigma}^{-1} x_1$, where $\hat{\Sigma}$ is the empirical covariance matrix for the dataset $\Dl$ and $C$ is a (data-dependent) scalar constant.
\end{proposition}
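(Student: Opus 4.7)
The plan is to obtain $\tf$ and $\tl$ in closed form via the normal equations, expand $\tf$ using a rank-one perturbation identity, subtract $\tl$, and then take $\lambda \to \infty$.

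First, I would write $\tl = M^{-1}\tilde X^\T \tilde Y$, where $\tilde X, \tilde Y$ denote the data matrix and response vector for $\Dl$ and $M \deq \tilde X^\T \tilde X + \lambda_{\mathrm{reg}} I$ (for the unregularized case, simply $M = \tilde X^\T \tilde X = n\hat\Sigma$). Since $\Df$ replaces $(x_1,y_1)$ with $(\lambda x_1,\lambda y_1)$, the Hessian and gradient contributions of the outlier scale by $\lambda^2$, giving
\begin{equation*}
\tf = (\lambda^2 x_1 x_1^\T + M)^{-1}(\lambda^2 y_1 x_1 + \tilde X^\T \tilde Y).
\end{equation*}

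Next, I would apply the Sherman--Morrison formula to handle the rank-one update: $(\lambda^2 x_1 x_1^\T + M)^{-1} = M^{-1} - \frac{\lambda^2 M^{-1} x_1 x_1^\T M^{-1}}{1 + \lambda^2 x_1^\T M^{-1} x_1}$. Substituting this and expanding, the terms involving $\tilde X^\T \tilde Y$ will partially recombine to give back $\tl$. After collecting terms over a common denominator, the key cancellation (of the $\lambda^2 y_1$ piece against itself inside the bracket) should yield the clean expression
\begin{equation*}
\tf - \tl = \frac{\lambda^2\,(y_1 - x_1^\T \tl)}{1 + \lambda^2\, x_1^\T M^{-1} x_1}\; M^{-1} x_1.
\end{equation*}

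Finally, taking $\lambda \to \infty$, the prefactor limits to $(y_1 - x_1^\T \tl)/(x_1^\T M^{-1} x_1)$, which is a finite scalar depending only on $\Dl$ and $(x_1, y_1)$. In the unregularized case $M^{-1} = \frac{1}{n}\hat\Sigma^{-1}$, so the $n$ factors cancel between numerator and denominator, giving
\begin{equation*}
\tf - \tl \;\longrightarrow\; \frac{y_1 - x_1^\T \tl}{x_1^\T \hat\Sigma^{-1} x_1}\;\hat\Sigma^{-1} x_1 \;=\; C\,\hat\Sigma^{-1} x_1,
\end{equation*}
with $C$ the claimed data-dependent constant (essentially the leave-one-out studentized residual of $x_1$).

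The main obstacle, which is mild, is keeping the algebra straight through the Sherman--Morrison expansion so that all the $\lambda^2$ and $y_1$ terms cancel into the compact form above; the rest is just taking the limit and matching notation with $\hat\Sigma$. A secondary subtlety is that the identification $M^{-1} \propto \hat\Sigma^{-1}$ is exact only when $\lambda_{\mathrm{reg}}=0$; with regularization, one should either absorb the extra $\lambda_{\mathrm{reg}} I/n$ into a modified $\hat\Sigma$ or note that the limit direction $M^{-1}x_1$ still exists and is independent of $\lambda$, which is all that matters for the asymptotic behavior contrasted with Theorem \ref{infbad}.
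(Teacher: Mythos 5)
Your proposal is correct and follows essentially the same route as the paper's proof: closed-form normal equations for $\tf$ and $\tl$, a Sherman--Morrison expansion of the rank-one perturbation, and a limit as $\lambda\to\infty$, arriving at the same constant $C=(y_1-x_1^\T\tl)/(x_1^\T\hat\Sigma^{-1}x_1)$. If anything your intermediate identity $\tf-\tl=\frac{\lambda^2(y_1-x_1^\T\tl)}{1+\lambda^2 x_1^\T M^{-1}x_1}M^{-1}x_1$ is exact for every $\lambda$ and tidier than the paper's term-by-term Taylor expansion of the two pieces (I) and (II), but the underlying argument is the same.
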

\begin{proof}
Departing slightly from the notation in section 2, let $X$ and $Y$ denote the feature matrix and response vector, respectively, for the dataset $\Dl$. The exact values of $\tf$ and $\tl$ are then given by
\begin{align*}
    \tl &= (X^\T X)^{-1}XY \\
    \tf &= (X^\T X + \lambda^2 x_1x_x^\T)^{-1} (XY + \lambda^2 y_1 x_1).
\end{align*}
We can expand the expression for $\tf$ with the Sherman-Morrison formula:
\begin{equation}\label{full}
\tf = \left[ (X^\T X)^{-1} - \frac{\lambda^2 (X^\T X)^{-1}x_1 x_1^\T (X^\T X)^{-1}}{1+\lambda^2 x_1^\T (X^\T X)^{-1} x_1} \right]\cdot (XY + \lambda^2 y_1 x_1). 
\end{equation}
From equation (\ref{full}), we see that the actual step is
\begin{align}
\tf - \tl \nonumber &= \lambda^2 y_1 (X^\T X)^{-1}x_1 -\frac{\lambda^2 (X^\T X)^{-1}x_1 x_1^\T (X^\T X)^{-1}}{1+\lambda^2 x_1^\T (X^\T X)^{-1} x_1}(XY + \lambda^2 y_1 x_1) \nonumber \\[5pt]
&= (X^\T X)^{-1}\left(y_1\underbrace{\lambda^2\left[ I - \frac{\lambda^2 x_1 x_1^\T (X^\T X)^{-1}}{1+\lambda^2 x_1^\T (X^\T X)^{-1} x_1}\right]x_1}_{\textrm{(I)}} - \underbrace{\frac{\lambda^2 x_1x_1^\T (X^\T X)^{-1}}{1 + \lambda^2 x_1^\T (X^\T X)^{-1}x_1}XY}_{\textrm{(II)}}\right). \label{truestep}
\end{align}
Let us analyze the behavior of the terms (I) and (II) in equation (\ref{truestep}) as $\lambda\rightarrow\infty$. Term (II) is straightforward: the $\lambda^2$ terms dominate both the numerator and the denominator, so we have
$$\textrm{(II)}\longrightarrow \frac{x_1x_1^\T (X^\T X)^{-1}XY}{x_1^\T (X^\T X)^{-1} x_1} = \frac{x_1^\T (X^\T X)^{-1}XY}{x_1^\T (X^\T X)^{-1}x_1}x_1 \hspace{.25in} \textrm{as } \lambda\rightarrow\infty.$$ The term (I) is slightly more delicate, since the first-order behavior of (I) without multiplication by $\lambda^2$ tends to 0; however, the multiplication by $\lambda^2$ means that this term does not vanish. Observing that (I) can be rewritten as $$\lambda^2\left[1 - \frac{\lambda^2 x_1^\T (X^\T X)^{-1} x_1}{1 + \lambda^2 x_1^\T (X^\T X)^{-1}x_1}\right]x_1,$$ we have reduced our analysis of (I) to determining the leading order behavior of a function of the form \begin{equation}\label{I}f(\lambda)\equiv \lambda^2 \left[1-\frac{c\lambda^2}{1 + c\lambda^2}\right].\end{equation} (In our case, $c = x_1^\T (X^\T X)^{-1} x_1.$) A Taylor expansion of (\ref{I}) shows that $f(\lambda) = c^{-1} + O(\lambda^{-2})$, and thus we have
$$\textrm{(I)}\longrightarrow \frac{x_1}{x_1^\T (X^\T X)^{-1} x_1} \hspace{.25in} \textrm{as } \lambda\rightarrow\infty.$$ Substituting the limits of (I) and (II) into equation (\ref{truestep}), we see that
\begin{align}
    \tf - \tl \nonumber &\rightarrow (X^\T X)^{-1}\left[\frac{y_1}{x_1^\T (X^\T X)^{-1} x_1} x_1 - \frac{x_1^\T (X^\T X)^{-1}XY}{x_1^\T (X^\T X)^{-1} x_1}x_1\right] \nonumber\\[5pt]
    &= \underbrace{\frac{y_1 - x_1^\T (X^\T X)^{-1} XY}{x_1^\T (X^\T X)^{-1} x_1}}_{C'}(X^\T X)^{-1} x_1. \label{almost}
\end{align}
The result follows by multiplying and dividing (\ref{almost}) by a factor of $n$ (so $C=C'/n$ and the other factor of $n$ gets pulled into $(X^\T X)^{-1}$ to yield $\hat{\Sigma}^{-1}$).
\end{proof}

\begin{proof}[Proof of Theorem \ref{infbad}]
We will analyze $\ti - \tl$ and show that the limit of this difference is the same as that of $\tf - \tl$ as $\lambda\rightarrow\infty$; it immediately follows that $\ti\rightarrow\tf$. By the exactness of the Newton update for linear regression, we have
$$\tl = \tf + (X^\T X)^{-1}\lambda^2(\tf^\T x_1 - y_1)x_1.$$
By definition, the influence parameters are given by
$$\ti = \tf + (X^\T X + \lambda^2 x_1x_1^\T)^{-1}\lambda^2(\tf^\T x_1 - y_1)x_1.$$
Subtracting these two expressions yields
\begin{equation}\label{inf_analysis1}
    \ti - \tl = \lambda^2(\tf^\T x_1 - y_1) \cdot [(X^\T X + \lambda^2 x_1x_1^\T)^{-1} - (X^\T X)^{-1}] x_1. 
\end{equation}
We analyze the terms in the RHS of (\ref{inf_analysis1}) separately.

First, note that by the Sherman-Morrison formula, we have
\begin{align}
    [(X^\T X + \lambda^2 x_1x_1^\T)^{-1} - (X^\T X)^{-1}]x_1 &= \frac{-\lambda^2 (X^\T X)^{-1} x_1x_1^\T (X^\T X)^{-1}}{1 + \lambda^2 x_1^\T (X^\T X)^{-1} x_1} x_1\nonumber \\[5pt]
    &= \frac{-\lambda^2 x_1^\T (X^\T X)^{-1} x_1}{1+\lambda^2 x_1^\T (X^\T X)^{-1}x_1}(X^\T X)^{-1}x_1 \label{almost_term1} \\[5pt]
    &\rightarrow -(X^\T X)^{-1}x_1.\label{term1}
\end{align}
Equation (\ref{term1}) follows since the numerator and denominator of (\ref{almost_term1}) have the same leading order behavior in $\lambda$.

Next, we analyze the term $\tf^\T x_1 - y_1$. We begin by substituting the expression for $\tf$ and once more applying the Sherman-Morrison formula:
\begin{align}
    x_1^\T \tf - y_1 &= x_1^\T (X^\T X + \lambda^2 x_1x_1^\T)^{-1}(XY + \lambda^2 x_1 y_1) - y_1 \nonumber \\[5pt]
    &= x_1^\T \left[ (X^\T X)^{-1} - \frac{\lambda^2 (X^\T X)^{-1}x_1 x_1^\T (X^\T X)^{-1}}{1 + \lambda^2 x_1^\T (X^\T X)^{-1} x_1}\right](XY + \lambda^2 x_1 y_1) - y_1 \nonumber \\[5pt]
    &= x_1^\T (X^\T X)^{-1}\left[\underbrace{\left(I - \frac{\lambda^2 x_1 x_1^\T (X^\T X)^{-1}}{1 + \lambda^2 x_1^\T (X^\T X)^{-1} x_1}\right)XY}_{(i)} + y_1\underbrace{\left(\lambda^2\left[ I - \frac{\lambda^2 x_1 x_1^\T (X^\T X)^{-1}}{1 + \lambda^2 x_1^\T (X^\T X)^{-1} x_1}\right]x_1\right)}_{(ii)}\right] - y_1 \label{outlier_residual1}
\end{align}
We rearrange $(i)$ and $(ii)$ and then Taylor expand:
\begin{align*}
(i) &= XY - \frac{\lambda^2 x_1^\T (X^\T X)^{-1} XY}{1+\lambda^2 x_1^\T (X^\T X)^{-1} x_1} x_1 \nonumber \\[5pt]
&= XY - \frac{x_1^\T (X^\T X)^{-1} XY}{x_1^\T (X^\T X)^{-1} x_1}x_1 - \frac{x_1^\T (X^\T X)^{-1} XY}{(x_1^\T (X^\T X)^{-1} x_1)^2}\lambda^{-2}x_1 + O(\lambda^{-4})\\[5pt]
(ii) &= \lambda^2\left[1-\frac{\lambda^2 x_1^\T (X^\T X)^{-1} x_1}{1+\lambda^2 x_1^\T (X^\T X)^{-1} x_1}\right]x_1 \\[5pt]
&= \frac{x_1}{x_1^\T (X^\T X)^{-1} x_1} - \frac{\lambda^{-2}x_1}{(x_1^\T (X^\T X)^{-1} x_1)^2} + O(\lambda^{-4})
\end{align*}
Substituting these equations into equation (\ref{outlier_residual1}) yields
\begin{align}
x_1^\T \tf - y_1 &= x_1^\T (X^\T X)^{-1}\bigg[XY - \frac{x_1^\T (X^\T X)^{-1} XY}{x_1^\T (X^\T X)^{-1} x_1}x_1 + \frac{y_1x_1}{x_1^\T (X^\T X)^{-1} x_1} \nonumber \\[5pt]
&\hspace{1in} - \frac{x_1^\T (X^\T X)^{-1} XY}{(x_1^\T (X^\T X)^{-1} x_1)^2}\lambda^{-2}x_1 - \frac{\lambda^{-2}y_1x_1}{(x_1^\T (X^\T X)^{-1} x_1)^2}\bigg] - y_1 + O(\lambda^{-4})\nonumber\\[10pt]
&= \left(x_1^\T (X^\T X)^{-1} XY - \frac{x_1^\T (X^\T X)^{-1} XY}{x_1^\T (X^\T X)^{-1} x_1} x_1^\T (X^\T X)^{-1} x_1 + \frac{y_1x_1^\T (X^\T X)^{-1} x_1}{x_1^\T (X^\T X)^{-1} x_1} - y_1\right) \nonumber \\[5pt]
&\hspace{.25in} + \left(\frac{x_1^\T (X^\T X)^{-1} XY - y_1}{x_1^\T (X^\T X)^{-1} x_1}\right)\lambda^{-2} + O(\lambda^{-4})\nonumber\\[10pt]
&= \frac{x_1^\T (X^\T X)^{-1} XY - y_1}{x_1^\T (X^\T X)^{-1} x_1}\lambda^{-2} + O(\lambda^{-4}).\label{outlier_residual2}
\end{align}

Finally, we substitute the expressions from equations (\ref{term1}) and (\ref{outlier_residual2}) into (\ref{inf_analysis1}) to obtain
\begin{align*}
    \ti - \tl &= \left(-\frac{(X^\T X)^{-1} x_1x_1^\T (X^\T X)^{-1}}{x_1^\T (X^\T X)^{-1}x_1} + O(\lambda^{-2})\right)\lambda^2\left(\frac{x_1^\T (X^\T X)^{-1} XY - y_1}{x_1^\T (X^\T X)^{-1} x_1}\lambda^{-2} + O(\lambda^{-4}) \right)x_1 \\[5pt]
    &= \frac{y_1 - x_1^\T (X^\T X)^{-1} XY}{x_1^\T (X^\T X)^{-1} x_1}(X^\T X)^{-1}x_1 + O(\lambda^{-2}).
\end{align*}
Note that this has the same limiting value as $\tf - \tl$ as $\lambda\rightarrow\infty$ (see equation (\ref{almost})) and we are done.
\end{proof}

\section{Proof of Theorem \ref{lko}} \label{proof: lko}
We prove Theorem \ref{lko} for the case of ordinary least squares. We generalize this logic to weighted, ridge regularized least squares in Appendix \ref{appendix: generalized lko}.

\begin{proof}[Proof of Theorem \ref{lko}]
We make use of the analytic form of the parameters for least squares linear regression. Given a dataset $\{(x_i, y_i)\}_{i=1}^n$, we have $\tf = (X^\T X)^{-1} XY$, with $X, Y$ defined as in section 2. The predictions for the fitted model on the dataset are then given by \begin{equation}\label{hat}\hat{Y} = X\tf = \underbrace{(X(X^\T X)^{-1}X^\T)}_H Y,\end{equation} where $H = X(X^\T X)^{-1} X^\T$ is the so-called hat matrix. As previously mentioned, we assume that we already have access to $H$ after the model has been trained on the full dataset.

Next, observe that $$\tk = \argmint \left[\sum_{i=1}^k (\theta^\T x_i - \yk_i)^2 + \sum_{i=k+1}^n (\theta^\T x_i - y_i)^2\right]$$ since $\tk$ minimizes both sums individually. It follows from equation (\ref{hat}) that $HY' = \hat{Y}_{\setminus k}$, where $Y' = (\yk_1, \ldots, \yk_k, y_{k+1}, \ldots, y_n)^\T$ and $\hat{Y}_{\setminus k} = (\yk_1,\ldots,\yk_n)^\T$.

This relation $HY' = \hat{Y}_{\setminus k}$ allows us to derive a system of linear equations between $\yk_i$ for $i=1,\ldots,k$. Namely, if we define $r_i = y_i - \yhat_i$, $r = (r_1,\ldots,r_k)^\T$, $r^{\setminus k}_i = y_i - \yk_i$, and $r^{\setminus k}=(r^{\setminus k}_1, \ldots, r^{\setminus k}_k)^\T$, we have
\begin{equation}\label{res1}
    r^{\setminus k}_i = \frac{r_i + \sum_{j\neq i} h_{ij} r^{\setminus k}_j}{1-h_{ii}},
\end{equation}
where $h_{ij}$ are the entries of $H$. Vectorizing equation (\ref{res1}) and solving yields
\begin{equation}\label{lin2}\textstyle
    r^{\setminus k} = (I-T)^{-1}(\frac{r_1}{1-h_{11}}, \: \ldots, \: \frac{r_k}{1-h_{kk}})^\T,
\end{equation}
where $T_{ij} = \textbf{1}\{i\neq j\} \frac{h_{ij}}{1-h_{jj}}$. Since this is a system of $k$ linear equations in $k$ unknowns, we can solve it in time $O(k^3)$ via simple Gaussian elimination. The values $\yk_i$ can then be easily recovered in an additional $O(k)$ time by noting that $\yk_i = y_i - r^{\setminus k}_i$.
\end{proof}

\section{Generalization of Theorem \ref{lko} to weighted, ridge regularized least squares} \label{appendix: generalized lko}
Refer to Appendix \ref{proof: lko}. We can generalize our method for computing the predictions of the LKO model to weighted least squares with ridge regularization. Let $w\succeq0\in\mathbb{R}^n$ denote a (fixed) weight vector and $\lambda\geq0$ be the regularization strength, which we require to be fixed independent of the number of samples. The weighted, regularized loss is given by
\begin{align*}
\Lf(\theta) &= \frac12\left(\sum_{i=1}^n w_i(\theta^\T x_i - y_i)^2 + \lambda \lVert \theta \rVert^2\right) \\
&= \frac12 [(X\theta - Y)^\T W (X\theta - Y) + \lambda \lVert \theta \rVert^2].
\end{align*}
The gradient is therefore
\begin{equation}\label{weightreglossgrad}\nabla \Lf(\theta) = X^\T W X\theta - X^\T W Y + \lambda \theta\end{equation}
Using equation (\ref{weightreglossgrad}), we see that $\nabla \Lf = 0$ when $$\tf =  (X^\T W X + \lambda I)^{-1}X^\T W Y.$$
Predictions are therefore given by
$$X\tf = \underbrace{X(X^\T W X+\lambda I)^{-1}X^\T W}_{H_\lambda, w} Y.$$ If we replace $H$ in equation (\ref{hat}) with $H_{\lambda, w}$, the same logic carries through. Note that the regularization strength needs to be fixed for us to use the same trick, i.e. to write
$$\tk = \argmint \sum_{i=1}^k w_i(\theta^\T x_i - \yk_i)^2 + \sum_{i=k+1}^n w_i(\theta^\T x_i - y_i)^2  + \lambda\lVert \theta \rVert^2$$
with $\yk_i = \tk^\T x_i$ the predicted $y$-value for the LKO model. In this case, we can compute the LKO prediction values efficiently ($O(k^3)$ time when we precompute $H_{\lambda, w}$). Theorem \ref{lko} therefore holds in this more general setting as well.

\section{Proof of Theorem \ref{logisticresup}}\label{gen_to_logistic}


\begin{proof}
For logistic regression, we use the loss function
\begin{equation*}
L(\theta) = \sum_{i=1}^n [y_i \log h_\theta(x_i) + (1-y_i)\log(1-h_\theta(x_i))] + \frac12 \lambda\lVert \theta \rVert^2, \label{logisticloss}
\end{equation*} where $(x_i, y_i)\in \mathbb{R}^d \times \{0,1\}$ are the data, and the classifier $h_\theta(x)$ is given by $$h_\theta(x) = \frac{1}{1+\exp\{-\theta^\T x\}}.$$
We compute the gradient and Hessian of the loss:
\begin{equation}\label{logisticgrad}\nabla L(\theta) = \sum_{i=1}^n (h_\theta(x_i) - y_i)x_i + \lambda \theta = \bar{X}^\T (\bar{h}_\theta - \bar{Y}) + \lambda \theta\end{equation}
\begin{equation}\label{logistichess}\nabla^2 L(\theta) = \sum_{i=1}^n h_\theta(x_i)(1-h_\theta(x_i)) x_ix_i^\T + \lambda I = \bar{X}^\T \bar{S}_\theta \bar{X} + \lambda I,\end{equation}
where $\bar{X}\in\mathbb{R}^{n\times d}$ is the data matrix whose rows are $x_i^\T$, $\bar{h}_\theta\in\mathbb{R}^n$ is the vector of model predictions, $\bar{Y}\in\mathbb{R}^n$ is the vector of labels, and $\bar{S}_\theta = \textrm{diag}(\{h_\theta(x_i)(1-h_\theta(x_i)\}_{i=1}^n)\in\R^{n\times n}$. Using these formulas, we can compute a Newton step for the LKO loss when we start at the minimizer $\tf$ for the full loss.

Now let $X = (x_{k+1} \: \cdots x_n)^\T\in \mathbb{R}^{(n-k)\times d}$, $Y = (y_{k+1},\ldots,y_n)^\T\in\R^{n-k}$, $h_{\tf} = (h_{\tf}(x_{k+1}),\ldots,h_{\tf}(x_n))^\T\in\R^{n-k}$,  $S_{\tf}=\textrm{diag}(\{h_{\tf}(x_i)(1-h_{\tf}(x_i)\}_{i=k+1}^n)$ be the LKO quantities corresponding to the terms defined above. By definition, we have
\begin{align}
    \theta^{\textrm{Newton}} &= \tf-[\nabla^2 \Lk(\tf)]^{-1} \nabla L_{LKO}(\tf) \nonumber\\
    &= \tf + (X^\T S_{\tf} X + \lambda I)^{-1} (X^\T (Y - h_{\tf}) - \lambda \tf) \nonumber\\
    &= (X^\T S_{\tf} X + \lambda I)^{-1}X^\T S_{\tf}( X\tf + S_{\tf}^{-1}(Y - h_{\tf})) \nonumber\\
    &= (X^\T S_{\tf} X + \lambda I)^{-1} X^\T S_{\tf}Z, \label{logisticnewton}
\end{align}
where $Z \equiv X\tf + S_{\tf}^{-1}(Y - h_{\tf})$. Observe that equation (\ref{logisticnewton}) is the solution to the LKO weighted least squares problem
\begin{equation}\label{logisticls}\min_\theta \sum_{i=k+1}^n h_{\tf}(x_i)(1-h_{\tf}(x_i))(\theta^\T x_i - z_i)^2 + \lambda \lVert \theta \rVert^2,\end{equation}
where $z_i$ is the $i$-th component of $\bar{Z} \equiv \bar{X}\tf + \bar{S}_{\tf}^{-1}(\bar{Y}-\bar{h}_{\tf})$. By adapting the PRU to this situation, we can compute a fast approximation to the Newton step.

We can compute the vector $\bar{Z} \equiv \bar{X}\tf + \bar{S}_{\tf}^{-1}(\bar{Y} - \bar{h}_{\tf})$, as well as the matrix $H_{\lambda, \bar{h}_{\tf}} \equiv \bar{X}(\bar{X}^\T \bar{S}_{\tf}\bar{X} + \lambda I)^{-1} \bar{X}^\T \bar{S}_{\tf}$, offline. Observe that $H_{\lambda, \bar{h}_{\tf}}$ is the hat matrix for the ``full" least squares problem 
\begin{equation}\label{full_log_ls}
    \min_\theta \sum_{i=1}^n h_{\tf}(x_i)(1-h_{\tf}(x_i))(\theta^\T x_i - z_i)^2 + \lambda \lVert \theta \rVert^2.
\end{equation}
For consistency with the rest of the paper, let $\tk$ be the exact solution to (\ref{logisticls}) (so $\tk = \theta^{\textrm{Newton}}$). By the result of Theorem \ref{lko}, we can compute the LKO model predictions $\zk_i \equiv \tk^\T x_i$, $i=1,\ldots,k$ in $O(k^3)$ time. Observe that the gradient of the (unregularized, unweighted, quadratic) loss on the synthetic points $(x_i, \zk_i)$ is
\begin{equation}\label{synth_log_loss}
    \sum_{i=1}^k (\tf^x_i - \zk)x_i = \left(\sum_{i=1}^k x_i x_i^\T\right)(\tf - \tk).
\end{equation}
We are now in a setting exactly analogous to equation (\ref{rewrite}), even though $\tf$ was the minimizer for the original \emph{cross-entropy} objective rather than (\ref{full_log_ls}). By mimicking the proof of Theorem \ref{main} from this point, we can derive the exact same results. Namely, the step taken by the projective residual update is equal to $\textrm{proj}_{\textrm{span}(x_1,\ldots,x_k)}(\tk-\tf)$. By definition of $\tk$ and of the Newton step, it follows that $\tk-\tf=\Delta_{\textrm{Newton}}$. Combining these two facts yields the statement of Theorem \ref{logisticresup}. The computational cost calculation is identical to the calculation in Theorem \ref{main}.
\end{proof}

\section{Synthetic data construction} \label{sec: data construction}
We first generate a matrix of $n$ $d$-dimensional covariates $X\in\R^{n\times d}$; we do this by drawing the rows $x_i^\T$ of $X$ according to $x_i \iid N(0, \Sigma)$, where $\Sigma$ is randomly selected via \texttt{sklearn.datasets.make\_spd\_matrix} \citep{sklearn}. Once $X$ is generated, the response vector $Y\in\R^n$ is generated by randomly selecting a (fixed) ``true" underlying parameter $\theta^*\in\R^d$, and setting $Y = X\theta^* + \varepsilon$, where $\varepsilon \sim N(0, \sigma^2 I_n)$ is the error vector. For our experiments, we set the noise level $\sigma^2 = 1$; for reasonable values of $\sigma^2$ this parameter does not play a large role in the outcome of the experiments. For all of the synthetic experiments, when deleting a group of size $k$, we always assume that it is the first $k$ datapoints which are being deleted. (That is, we delete the datapoints specified by the first $k$ rows of $X$ and the first $k$ entries of $Y$.) For all of the synthetic datasets, we take $n=10d$.

For the runtime experiment, no modifications are made to the general setup. We vary the dimension $d$ between $d=1000$ and $d=3000$ and the group size $k$ between $k=1$ and $k=100$.

For the $L^2$ experiment, we first construct $\tilde{X}$ and $\tilde{Y}$ according to the general procedure above. We then obtain the data $X, Y$ by multiplying the first $k$ rows of $\tilde{X}$ and the first $k$ entries of $\tilde{Y}$ (that is, the points which will eventually be deleted) by a factor $\lambda$ to demonstrate the effectiveness of each method at removing outlier datapoints. This is the setting described in section \ref{performance}.

The modifications for the FIT experiment are slightly more involved. We construct sparse data with three key properties: (1) only the deleted feature vectors $x_i$, $i=1,\ldots,k$ have nonzero $d$-th entry (this is the ``injected feature"); (2) the deleted feature vectors all lie on the same low-dimensional subspace; (3) the response for the deleted points is perfectly correlated with the special feature. The exact steps for this procedure are as follows:
\begin{enumerate}
    \item Construct $\tilde{X}$ according to the general procedure. (Pick a random covariance $\Sigma$ and draw the rows $x_i^\T$ of $\tilde{X}$ according to $x_i\iid N(0, \Sigma)$.)
    \item The ``injected feature" will be the last ($d$-th) entry of each vector $x_i$. Since only the group being removed has the injected feature, we set the last entry in rows $k+1$ to $n$ of $\tilde{X}$ equal to 0; the first $k$ rows keep their original final entry.
    \item Sparsify $\tilde{X}$ so that it has a fraction of approximately $p$ nonzero entries. Let $\tilde{X}[i, j]$ denote the $(i, j)$-th entry of $\tilde{X}$.
    \begin{enumerate}
        \item Sparsify the first $k$ rows of $\tilde{X}$ simultaneously: for each $j=1,\ldots,d-1$, set $\tilde{X}[i, j] = 0$ for all $i=1,\ldots,k$ with probability $1-p$.
        \item Sparsify the remaining entries of $\tilde{X}$: for each $i=k+1,\ldots, n$ and $j=1,\ldots,d-1$, set $X[i, j] = 0$ with probability $1-p$.
    \end{enumerate}
    \item Let $X$ be the matrix resulting from performing operations 1-3 on $\tilde{X}$. Set $\tilde{Y}$ according to the general procedure: $\tilde{Y} = X\theta^* + \varepsilon$.
    \item Let $\tilde{Y}[i]$ denote the $i$-th entry of $\tilde{Y}$. For $i=1,\ldots, k$, set $\tilde{Y}[i] = w_*  X[i, d]$, where $w_*$ is the pre-specified ``true" weight of the injected feature.
\end{enumerate}
For the sparse logistic regression experiment, the matrix of covariates $X$ is generated according to the procedure above. The labels $Y$ are then generated so that the logistic model is well-specified, as described in section \ref{sec: logistic regression}.

To generate data for the FIT for logistic regression, the procedure is the same as the one outlined for linear regression above (and as outlined in Section \ref{metrics}) with some minor changes. WLOG assume that the points to be deleted are $(X[i, :], Y[i])_{i=1}^k$. We require the following:
\begin{enumerate}
    \item The $k$ points to be deleted all belong to the positive class, i.e. $Y[i] = 1$, $i=1,\ldots, k$.
    \item The $k$ points to be deleted are classified correctly by the full model, i.e. $X[i,\, :]^\T \tf > 0$, $i=1,\ldots,k$. ($X[i,\, :]$ denotes the $i$-th row of the data matrix $X$.)
    \item The $k$ points to be deleted have injected feature equal to 1, while the points that remain all have injected feature equal to 0. That is, $X[i, d] = 1$ for $i = 1,\ldots, k$ and $X[i, d] = 0$ for all $i > k$.
\end{enumerate}

\section{Baseline values for synthetic linear regression experiments}
All of the experimental results in the main body of the paper are given relative to an absolute baseline value. In Tables \ref{runtime_abs}, \ref{weight_abs}, and \ref{l2_abs}, we report the medians of the absolute baseline values to which we are comparing for each of the three synthetic experiments (runtime, $L^2$, and feature injection, respectively) for linear regression. The baseline values follow the trends we would expect. In particular, the runtimes increase sharply as the dimension increases and slowly as the group size increases (Table \ref{runtime_abs}); the unique feature weight originally learned by the model is close to its true value, 10 (Table \ref{weight_abs}); and the distance between $\tf$ and $\tk$ increases with the number of points removed, as well as the dissimilarity of these points to the rest of the dataset as measured by the multiplier $\lambda$ (Table \ref{l2_abs}).
\begin{table}[h]
\caption{Median exact retraining runtimes in seconds for table \ref{runtime_experiment}. The method used was a Newton step with the Sherman-Morrison formula.}
\label{runtime_abs}
\begin{center}
\begin{small}
\begin{sc}
\begin{tabular}{l|ccccc}
\toprule
& $d = 1000$ & $d = 1500$ & $d = 2000$ & $d = 2500$ & $d = 3000$\\
\midrule
$k = 1$ &
0.08 &
0.27 &
0.67 &
1.19 &
2.25 \\
\midrule
$k = 5$ &
0.08 &
0.31 &
0.67 &
1.33 &
2.22 \\
\midrule
$k = 10$ &
0.08 &
0.31 &
0.63 &
1.33 &
2.04 \\
\midrule
$k = 25$ &
0.08 &
0.32 &
0.62 &
1.36 &
2.06 \\
\midrule
$k = 50$ &
0.09 &
0.33 &
0.64 &
1.40 &
2.11 \\
\bottomrule
\end{tabular}
\end{sc}
\end{small}
\end{center}
\end{table}

\begin{table}[!htb]
\begin{minipage}{.47\linewidth}
    \centering
    \caption{Median baseline weights on injected feature for table \ref{synthetic_weight}.}
    \label{weight_abs}
\begin{tabular}{l|cccc}
\toprule
& $p = 0.25$ & $0.1$ & $0.05$\\
\midrule
$k = 5$ &
8.97 &
10.61 &
11.03 \\
\midrule
$k = 50$ &
10.23 &
9.73 &
10.10 \\
\midrule
$k = 100$ &
9.51 &
9.99 &
10.01 \\
\bottomrule
\end{tabular}
\end{minipage}
\hfill
\begin{minipage}{.47\linewidth}
    \centering
    \caption{Median baseline $L^2$ parameter distances for table \ref{synthetic_l2}.}
    \label{l2_abs}
\begin{tabular}{l|ccc}
\toprule
& $\lambda = 1$ & $\lambda = 10$ & $\lambda = 100$ \\
\midrule
$k = 5$ &
0.018 &
0.175 &
0.192 \\
\midrule
$k = 50$ &
0.057 &
0.523 &
0.572 \\
\midrule
$k = 100$ &
0.082 &
0.761 &
0.842 \\
\bottomrule
\end{tabular}
\end{minipage}
\end{table}

\section{Detailed experimental results for logistic regression}
Here we give the complete results---inlcuding the results for Newton's method, as well as the IQR for each setting---for the logistic regression experiments. As explained in appendix \ref{gen_to_logistic}, the logistic PRU computes a projection of the Newton step onto a lower-dimensional subspace. (In fact, for both linear and logistic regression, the PRU computes a projection of the Newton step. It just happens that for linear regression, the Newton step is exact, while this is no longer the case for logistic regression.) As a result, retraining via Newton's method is more accurate than retraining via the PRU. The PRU's advantage lies in its combination of accuracy and speed. While slightly less accurate than Newton's method, the PRU can be up to thousands of time faster. Indeed, since the computational cost of Newton's method for logistic regression is the same as the computational cost of exact retraining for linear regression, the PRU has the same favorable runtime comparisons as in Table \ref{runtime_experiment}.

\begin{table}[H]
\caption{Complete results for the sparse logistic FIT. For larger group sizes and sparse data, the PRU is able to completely remove the injected feature. With any strictly positive regularization, Newton's method will completely remove the injected feature, but its computational cost is vastly slower than that of the PRU (see Table \ref{runtime_experiment}).}
\begin{center}
\begin{small}
\begin{sc}
\begin{tabular}{l|ccc}
\toprule
              & $p=0.5$            & $p=0.1$            & $p=0.05$           \\
\midrule
$k=25$ (inf)  & \textbf{0.82 (0.79 - 0.82)} & 0.76 (0.73 - 0.80) & 0.78 (0.77 - 0.79) \\
$k=25$ (pru)  & 0.86 (0.83 - 0.88) & \textbf{0.69 (0.64 - 0.70)} & \textbf{0.44 (0.40 - 0.50)} \\
$k=25$ (nwt)  & 0.0 (0.0 - 0.0)    & 0.0 (0.0 - 0.0)    & 0.0 (0.0 - 0.0)    \\
\midrule
$k=50$ (inf) & 0.81 (0.78 - 0.84) & 0.82 (0.81 - 0.83) & 0.82 (0.80 - 0.84) \\
$k=50$ (pru) & 0.81 (0.78 - 0.84) & \textbf{0.48 (0.48 - 0.54)} & \textbf{0.02 (0.00 - 0.03)} \\
$k=50$ (nwt) & 0.0 (0.0 - 0.0)    & 0.0 (0.0 - 0.0)    & 0.0 (0.0 - 0.0)    \\
\midrule
$k=100$ (inf) & 0.82 (0.81 - 0.83) & 0.85 (0.83 - 0.86) & 0.84 (0.82 - 0.85) \\
$k=100$ (pru) & \textbf{0.71 (0.69 - 0.71)} & \textbf{0.00 (0.00 - 0.01)} & \textbf{0.0 (0.0 - 0.0)}    \\
$k=100$ (nwt) & 0.0 (0.0 - 0.0)    & 0.0 (0.0 - 0.0)    & 0.0 (0.0 - 0.0)    \\
\bottomrule
\end{tabular}
\end{sc}
\end{small}
\end{center}
\end{table}

\begin{table}[H]
\caption{Complete results for the sparse logistic $L^2$ experiment. For sparse data and moderate group deletion sizes, the PRU's performance surpasses the performance of the influence method. The PRU becomes nearly as accurate as Newton's method while maintaining a faster runtime.}
\begin{center}
\begin{small}
\begin{sc}
\begin{tabular}{l|ccc}
\toprule
& $p=0.5$                    & $p=0.1$                    & $p=0.05$      \\
\midrule
$k=25$ (inf)  & \textbf{0.85 (0.81 - 0.87)} & \textbf{0.78 (0.75 - 0.80)} & 0.78 (0.77 - 0.79) \\
$k=25$ (pru)  & 0.86 (0.84 - 0.87) & 0.80 (0.79 - 0.81) & \textbf{0.65 (0.63 - 0.69)} \\
$k=25$ (nwt)  & 0.08 (0.07 - 0.09) & 0.02 (0.02 - 0.03) & 0.01 (0.01 - 0.02) \\
\midrule
$k=50$ (inf) & 0.85 (0.82 - 0.86) & 0.83 (0.81 - 0.83) & 0.82 (0.80 - 0.84) \\
$k=50$ (pru) & 0.85 (0.83 - 0.86) & \textbf{0.69 (0.68 - 0.72)} & \textbf{0.20 (0.18 - 0.25)} \\
$k=50$ (nwt) & 0.08 (0.07 - 0.09) & 0.03 (0.02 - 0.03) & 0.01 (0.01 - 0.02) \\
\midrule
$k=100$ (inf) & 0.85 (0.84 - 0.85) & 0.86 (0.84 - 0.87) & 0.84 (0.82 - 0.85) \\
$k=100$ (pru) & \textbf{0.80 (0.79 - 0.81)} & \textbf{0.24 (0.21 - 0.24)} & \textbf{0.13 (0.12 - 0.14)} \\
$k=100$ (nwt) & 0.09 (0.08 - 0.09) & 0.03 (0.02 - 0.04) & 0.01 (0.01 - 0.01) \\
\bottomrule
\end{tabular}
\end{sc}
\end{small}
\end{center}
\end{table}
\end{document}